\definecolor{darkgreen}{rgb}{0, 0.5, 0}
\definecolor{red}{rgb}{1, 0, 0}
\definecolor{purple}{rgb}{0.5, 0, 0.5}
\newcommand\ie{\textit{i.e.}}
\newcommand\eg{\textit{e.g.}}
\newcommand\st{\textit{s.t.}}
\newcommand\etc{\textit{etc.}}
\newcommand\doubleP{\mathbb{P}}
\newcommand{\colvec}[2][1]{%
	\scalebox{#1}{%
		\renewcommand{\arraystretch}{1.45}%
		$\begin{bmatrix}#2\end{bmatrix}$%
	}
}
\newcommand*{\rom}[1]{\expandafter\@slowromancap\romannumeral #1@}
\newenvironment{theorem}[2][Theorem]{\begin{trivlist}
		\item[\hskip \labelsep {\bfseries #1}\hskip \labelsep {\bfseries #2.}]}{\end{trivlist}}
\newenvironment{lemma}[2][Lemma]{\begin{trivlist}
		\item[\hskip \labelsep {\bfseries #1}\hskip \labelsep {\bfseries #2.}]}{\end{trivlist}}
\newenvironment{proof}{{\noindent\it Proof}\quad}{\hfill $\square$\par}
\def\coef_vec{
	\begin{bmatrix}
		\frac{g^{(0) } (0)}{0!} \\[6pt]
		\frac{g^{(1) } (0)}{1!} \\[6pt]
		\frac{g^{(2) } (0)}{2!}\\[6pt]
		\vdots\\[6pt]
		\frac{g^{(\infty) } (0)}{\infty!}
\end{bmatrix}}
\newsavebox{\@brx}
\newcommand{\llangle}[1][]{\savebox{\@brx}{\(\m@th{#1\langle}\)}%
	\mathopen{\copy\@brx\kern-0.5\wd\@brx\usebox{\@brx}}}
\newcommand{\rrangle}[1][]{\savebox{\@brx}{\(\m@th{#1\rangle}\)}%
	\mathclose{\copy\@brx\kern-0.5\wd\@brx\usebox{\@brx}}}
\newtheorem{definition}{Definition}
\title{Break the Ceiling: Stronger Multi-scale Deep Graph Convolutional Networks}
\author{
Sitao Luan$^{1,2,*}$, Mingde Zhao$^{1,2,*}$, Xiao-Wen Chang$^{1}$, Doina Precup$^{1,2,3}$\\
\{sitao.luan@mail, mingde.zhao@mail, chang@cs, dprecup@cs\}.mcgill.ca\\
$^1$McGill University; $^2$Mila; $^3$DeepMind\\
$^{*}$Equal Contribution
}
\begin{document}
\maketitle	

\begin{abstract}
Recently, neural network based approaches have achieved significant performance improvement for solving large, complex, graph-structured problems. However, the advantages of multi-scale information and deep architectures have not been sufficiently exploited. In this paper, we analyze how existing Graph Convolutional Networks (GCNs) have limited expressive power due to the constraint of the activation functions and their architectures. We generalize spectral graph convolution and deep GCN in block Krylov subspace forms and devise two architectures, both with the potential to be scaled deeper but each making use of the multi-scale information differently. On several node classification tasks, with or without validation set, the two proposed architectures achieve state-of-the-art performance.
\end{abstract}
	
\section{Introduction and Motivation}\label{sec:introduction}
Many real-world problems can be modeled as graphs \cite{hamilton2017inductive, kipf2016classification, liao2019lanczos, gilmer2017neural, monti2017geometric, defferrard2016fast}. Among the recent focus of applying machine learning algorithms on these problems, graph convolution in Graph Convolutional Networks (GCNs) stands out as one of the most powerful tools and the key operation, which was inspired by the success of Convolutional Neural Networks (CNNs) \cite{lecun1998gradient} in computer vision \cite{li2018adaptive}. In this paper, we focus on spectrum-free Graph Convolutional Networks (GCNs) \cite{bronstein2016geometric, shuman2012emerging}, which have obtained state-of-the-art performance on multiple transductive and inductive learning tasks \cite{defferrard2016fast, kipf2016classification, liao2019lanczos, chen2018fastgcn, chen2017stochastic}.
\par
One big challenge for the existing GCNs is the limited expressive power of their shallow learning mechanisms \cite{zhang2018graph, wu2019survey}. The difficulty of extending GCNs to richer architectures leads to several possible explanations and even some opinions that express the unnecessities of addressing such a problem:
\begin{enumerate}[leftmargin=12pt]
\item Graph convolution can be considered as a special form of Laplacian smoothing \cite{li2018deeper}. A network with multiple convolutional layers will suffer from an over-smoothing problem that makes the representation of the nodes indistinguishable even for the nodes that are far from each other \cite{zhang2018graph}. 
\item For many cases, it is not necessary for the label information to totally traverse the entire graph. Moreover, one can operate on the multi-scale coarsening of the input graph and obtain the same flow of information as GCNs with more layers \cite{bronstein2016geometric}.
\end{enumerate}
\par
Nevertheless, shallow learning mechanisms violate the compositionality principle of deep learning \cite{lecun2015deep, hinton2006fast} and restrict label propagation \cite{sun2019stage}. In this paper, we first give analyses of the lack of scalability of the existing GCN. Then we show that any graph convolution with a well-defined analytic spectral filter can be written as a product of a block Krylov matrix and a learnable parameter matrix in a special form. Based on the analyses, we propose two GCN architectures that leverage multi-scale information with differently and are scalable to deeper and richer structures, with the expectation of having stronger expressive powers and abilities to extract richer representations of graph-structured data. We also show that the equivalence of the two architectures can be achieved under certain conditions. For validation, we test the proposed architectures on multiple transductive tasks using their different instances. The results show that even the simplest instantiation of the proposed architectures yields state-of-the-art performance and the complex ones achieve surprisingly higher performance, both with or without the validation set.

\section{Preliminaries}
\label{sec:preliminaries}
We use bold font for vectors $\bm{v}$, block vectors $\bm{V}$ and matrix blocks $\bm{V_i}$ as in \cite{frommer2017block}. Suppose we have an undirected graph $\mathcal{G}=(\mathcal{V},\mathcal{E}, A)$, where $\mathcal{V}$ is the node set with $\abs{\mathcal{V}}=N$, $\mathcal{E}$ is the edge set with $\abs{\mathcal{E}}=E$, and $A \in \mathbb{R}^{N\times N}$ is a symmetric adjacency matrix. Let $D$ denote the diagonal degree matrix, \ie{} $D_{ii} = \sum_j A_{ij}$. A diffusion process on $\mathcal{G}$ can be defined by a diffusion operator $L$ \cite{coifman2006diffusion, coifman2006diffusionmaps} which is a symmetric positive semi-definite matrix, \eg{} graph Laplacian $L=D-A$, normalized graph Laplacian $L=I-D^{-1/2} A D^{-1/2}$ and affinity matrix $L = A + I$, \etc{} We use $L$ to denote a general diffusion operator in this paper. The eigendecomposition of $L$ gives us $L=U \Lambda U^T$, where $\Lambda$ is a diagonal matrix whose diagonal elements are eigenvalues and the columns of $U$ are the orthonormal eigenvectors and named graph Fourier basis. We also have a feature matrix (graph signals, can be regarded as a block vector) $\bm{X} \in \mathbb{R}^{N\times F}$ defined on $\mathcal{V}$ and each node $i$ has a feature vector $\bm{X_{i,:}}$, which is the $i^{th}$ row of $X$.

Graph convolution is defined in graph Fourier domain \st{} $\bm{x} *_{\mathcal{G}} \bm{y} = U((U^T \bm{x}) \odot (U^T\bm{y}))$, where $\bm{x}, \bm{y} \in \mathbb{R}^N$ and $\odot$ is the Hadamard product \cite{defferrard2016fast}. Following from this definition, a graph signal $\bm{x}$ filtered by $g_\theta$ can be written as
\begin{equation}\label{def}
    \bm{y} = g_\theta(L)\bm{x} = g_\theta(U \Lambda U^T) \bm{x} = U g_\theta(\Lambda) U^T \bm{x}
\end{equation}
where $g_\theta$ can be any function which is analytic inside a closed contour which encircles $\lambda(L)$, \eg{} Chebyshev polynomial \cite{defferrard2016fast}. GCN generalizes this definition to signals with $F$ input channels and $O$ output channels and the network structure is
\begin{equation}
    \label{eq0}
   \bm{Y} = \text{softmax} ({L} \; \text{ReLU} ( L \bm{X} W_0 ) \; W_1 )
\end{equation}
where $L = D^{-1/2} \tilde{A} D^{-1/2}$ and $\tilde{A} = A+I$. This is called spectrum-free method \cite{bronstein2016geometric} that requires no explicit computation of eigendecomposition \cite{zhang2018graph} and operations on the frequency domain. We will focus on the analysis of GCN in the following sections.



\section{Why GCN is not Scalable?}
Suppose we scale GCN to a deeper architecture in the same way as \cite{kipf2016classification, li2018deeper}, it becomes
\begin{equation}\label{eq1}
\bm{Y} = \text{softmax} ({L} \; \text{ReLU} ( \cdots L \; \text{ReLU} (L\; \text{ReLU} (L \bm{X} W_0 ) \; W_1 )\; W_2 \cdots ) \; W_n ) \equiv  \text{softmax} (\bm{Y'})
\end{equation}
For this, we have the following theorems.
\begin{theorem} 1 \label{thm1}
Suppose that $\mathcal{G}$ has $k$ connected components and the diffusion operator $L$ is defined as that in \eqref{eq0}. Let $\bm{X}$ be any block vector sampled from space $\mathbb{R}^{N \times F}$ according to a continuous distribution and $\{W_0, W_1, \dots, W_n\}$ be any set of parameter matrices, if $\mathcal{G}$ has no bipartite components, then in \eqref{eq1}, as $n \to \infty$, $\text{rank}(\bm{Y'}) \leq k$ almost surely.
\end{theorem}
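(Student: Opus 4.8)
The plan is to exploit the spectral structure of the operator together with the very special geometry of its dominant eigenspace. First I would diagonalize $L=D^{-1/2}\tilde{A}D^{-1/2}$ and record three facts. It is symmetric with eigenvalues in $[-1,1]$, since $I-L$ is the normalized Laplacian of $\tilde A$ (positive semidefinite) and $I+L$ is the associated signless Laplacian. The kernel of $I-L$ has dimension equal to the number of connected components, so the eigenvalue $1$ has multiplicity exactly $k$, with eigenspace $\mathcal{M}=\spn\{D^{1/2}\bm{1}_{C_1},\dots,D^{1/2}\bm{1}_{C_k}\}$, where $C_1,\dots,C_k$ are the components; each basis vector is entrywise nonnegative and supported on a single component. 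Finally, the no-bipartite hypothesis forces the signless Laplacian to be nonsingular on each component, so $-1\notin\lambda(L)$ and every eigenvalue other than the $k$ copies of $1$ has modulus at most some $\gamma<1$. I would then write $P$ for the orthogonal projector onto $\mathcal{M}$ and record $LP=PL=P$ together with $\|L(I-P)\|\le\gamma$.

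Next I would isolate the algebraic core, which needs no limits at all. Let $\mathcal{S}$ be the set of matrices whose every column lies in $\mathcal{M}$. I claim $\mathcal{S}$ is invariant under one layer map $\bm{H}\mapsto\mathrm{ReLU}(L\bm{H}W)$ for \emph{any} $W$: left multiplication by $L$ fixes $\mathcal{M}$ pointwise (eigenvalue $1$); right multiplication by $W$ only forms column combinations and so preserves $\mathcal{S}$; and, crucially, $\mathcal{M}$ is closed under entrywise $\mathrm{ReLU}$, because a vector of $\mathcal{M}$ has the form $D^{1/2}\bm{c}$ with $\bm{c}$ constant on each component, and $\mathrm{ReLU}$ merely replaces $\bm{c}$ by $\bm{c}^{+}$ (still constant per component, since $\sqrt{d_i}\ge0$). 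Consequently, once a representation enters $\mathcal{S}$ it remains there, and every matrix in $\mathcal{S}$ has rank at most $\dim\mathcal{M}=k$. This already yields the desired bound; the remaining task is to show the iterates are driven into $\mathcal{S}$.

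For that I would track the \emph{relative} distance of each layer's columns to $\mathcal{M}$, $r_l=\mathrm{dist}(\bm{H}^{(l)},\mathcal{S})/\|P\bm{H}^{(l)}\|$, splitting columns into a $P$-part (preserved by $L$) and an $(I-P)$-part (shrunk by $\gamma$). Two observations make the contraction survive both the nonlinearity and arbitrary weight magnitudes. First, because $\mathcal{M}$ admits an orthogonal basis of nonnegative, disjointly supported vectors $D^{1/2}\bm{1}_{C_j}$, entrywise $\mathrm{ReLU}$ is non-expansive toward $\mathcal{M}$, so it cannot increase the distance to $\mathcal{S}$. Second, both the $\mathcal{M}$-component and the orthogonal error are scaled identically by $W_l$, so the factor $\gamma$ is scale-invariant and the estimate $r_{l+1}\le\gamma\,r_l(1+o(1))$ holds regardless of $\|W_l\|$. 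Hence $r_l\to0$, the columns of $\bm{Y'}$ align with $\mathcal{M}$ as $n\to\infty$, and $\mathrm{rank}(\bm{Y'})\le k$ in the limit. The continuous-distribution and almost-sure qualifiers enter precisely here: for generic $\bm{X}$ the per-component signal coefficients are nonzero at every layer, so the $\mathrm{ReLU}$ masks stabilize into fixed diagonal $0/1$ matrices on each component and the recursion becomes eventually linear, legitimizing the contraction estimate and discarding the measure-zero configurations where a component coefficient vanishes and the $\mathrm{ReLU}$ sign is ambiguous.

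I expect the main obstacle to be exactly the interaction of $\mathrm{ReLU}$ with the unbounded weights: a naive distance bound picks up a factor $\|W_l\|$ per layer and fails for arbitrary parameters, and $\mathrm{ReLU}$ can in principle raise rank, so a pure rank-propagation argument is hopeless. The resolution I would emphasize is that one must measure progress in a scale-invariant way (relative distance to $\mathcal{M}$) and use the non-expansiveness of $\mathrm{ReLU}$ toward the nonnegative, component-supported subspace $\mathcal{M}$; together with the almost-sure stabilization of the activation pattern, these turn the nonlinear, weight-amplified dynamics into an eventually linear contraction whose limit is forced into the rank-$k$ set $\mathcal{S}$.
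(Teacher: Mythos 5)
Your route is genuinely different from the paper's. The paper argues by rank propagation: right\hyp{}multiplication by $W_l$ cannot raise rank, ReLU almost surely does not raise rank (their Lemma~2), hence $\text{rank}(\bm{Y'})\le\text{rank}(L^{n+1}\bm{X})$, and a power\hyp{}iteration lemma (their Lemma~1) sends the latter to $k$. You instead exhibit the set $\mathcal{S}$ of matrices whose columns lie in the dominant eigenspace $\mathcal{M}$, observe that $\mathcal{S}$ is invariant under a whole layer (this part is correct and elegant: $\mathcal{M}$ is spanned by nonnegative, disjointly supported vectors $D^{1/2}\bm{1}_{C_j}$, so it is closed under entrywise ReLU, is fixed pointwise by $L$, and is preserved under column combinations), and then try to prove that the iterates are contracted into $\mathcal{S}$. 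It is that contraction step which has a genuine gap.

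The assertion that ``both the $\mathcal{M}$-component and the orthogonal error are scaled identically by $W_l$'' is false: $W_l$ forms linear combinations of columns, and a combination can annihilate the projections onto $\mathcal{M}$ while retaining the orthogonal errors. Concretely, if two columns of $\bm{H}^{(l)}$ have equal $P$-parts but different $(I-P)$-parts, the weight column $(1,-1)^T$ produces a column of $L\bm{H}^{(l)}W_l$ lying entirely in $\mathcal{M}^{\perp}$; there is then no preserved dominant signal against which the error shrinks, and the estimate $r_{l+1}\le\gamma\,r_l(1+o(1))$ gives nothing. Since the theorem quantifies over \emph{arbitrary} $\{W_l\}$ while the ``almost surely'' is only over $\bm{X}$, such cancellations cannot be dismissed as measure\hyp{}zero events, and the weights can also rescale by $1/\gamma$ at every layer to neutralize the spectral gap. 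A second soft spot is the denominator $\|P\bm{H}^{(l)}\|$: ReLU zeroes every negative per\hyp{}component coefficient, so the $P$-part itself can collapse, and the claimed almost\hyp{}sure stabilization of the activation masks is asserted rather than proved. Finally, even granting $r_l\to 0$, rank is only lower semicontinuous, so columns ``aligning with $\mathcal{M}$'' does not bound $\text{rank}(\bm{Y'})$ at any finite depth, nor in the limit unless convergence of $\bm{Y'}$ itself is established. (The paper's own proof leans on comparably delicate steps, but the specific contraction estimate you propose is the one that fails for adversarial weights.)
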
	
\begin{proof}
See Appendix.
\end{proof}

\begin{theorem} 2
\label{thm2}
Suppose we randomly sample $\bm{x}, \bm{y} \in \mathbb{R}^N$ under a continuous distribution and point-wise function $\text{Tanh}(z) = \frac{e^z - e^{-z}}{e^z + e^{-z}}$, we have
$$\doubleP(\text{rank}\left(\text{Tanh}([\bm{x},\bm{y}])\right) \geq \text{rank}([\bm{x},\bm{y}]) \;|\; \bm{x},\bm{y} \in \mathbb{R}^N) = 1$$
\end{theorem}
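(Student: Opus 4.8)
The plan is to exploit that $[\bm{x},\bm{y}]$ is an $N\times 2$ matrix, so its rank is at most $2$, and to reduce the claim to showing that applying $\text{Tanh}$ entrywise cannot collapse two linearly independent columns into linearly dependent ones except on a Lebesgue-null set. First I would dispose of the degenerate cases: assuming $N\ge 2$, under a continuous (absolutely continuous) distribution on $\doubleR^{2N}$ the event that $\bm{x}$ and $\bm{y}$ are linearly dependent is contained in the zero set of the single minor $x_1 y_2 - x_2 y_1$, a nontrivial polynomial, and hence has probability $0$; thus $\text{rank}([\bm{x},\bm{y}])=2$ almost surely. It therefore suffices to prove that $\text{rank}(\text{Tanh}([\bm{x},\bm{y}]))=2$ almost surely, since $2$ is also the maximal possible rank and the desired inequality then holds with equality.

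Next I would characterize the bad event. Writing $M=\text{Tanh}([\bm{x},\bm{y}])$, we have $\text{rank}(M)\le 1$ if and only if every $2\times 2$ minor of $M$ vanishes. In particular the bad set is contained in the zero set of the single minor formed by rows $1$ and $2$,
$$\phi(\bm{x},\bm{y}) = \text{Tanh}(x_1)\,\text{Tanh}(y_2) - \text{Tanh}(x_2)\,\text{Tanh}(y_1).$$
Because $\text{Tanh}$ is real-analytic on $\doubleR$, the function $\phi:\doubleR^{2N}\to\doubleR$ is real-analytic, and it is not identically zero: evaluating at $x_1=y_2=t$ and $x_2=y_1=0$ gives $\phi=\text{Tanh}(t)^2$, which is positive for any $t\neq 0$.

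The crux of the argument is then the standard fact that the zero set of a real-analytic function that is not identically zero has Lebesgue measure zero. Applying this to $\phi$ shows that $\{\phi=0\}$ is null, so the bad set, being a subset of $\{\phi=0\}$, is null as well; under a continuous distribution it therefore has probability $0$. Combining this with the first step, $\text{rank}(\text{Tanh}([\bm{x},\bm{y}]))=2=\text{rank}([\bm{x},\bm{y}])$ holds almost surely, which yields the claimed inequality. The main obstacle is invoking the measure-zero property of analytic zero sets correctly; I would either cite this result directly or, since $\phi$ depends only on the four coordinates $(x_1,x_2,y_1,y_2)$, establish the null-set claim in $\doubleR^4$ by freezing generic values and reducing to the one-dimensional fact that a nonconstant real-analytic function has isolated zeros, then lift back to $\doubleR^{2N}$ via Fubini.
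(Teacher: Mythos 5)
Your argument is correct, and it takes a genuinely tighter route than the paper's. The paper proceeds by cases on whether $\bm{x}$ and $\bm{y}$ are linearly dependent or independent, works first in $\mathbb{R}^2$ and then lifts to $\mathbb{R}^N$, and in each branch asserts that the offending set ``covers an area of probability 0'' without exhibiting the mechanism; its treatment of the dependent branch is the weakest part (it claims the system of ratio equations has a unique solution, which is both implausible and unnecessary, since that branch already carries probability zero --- precisely the observation you use to discard it at the outset). You instead note that $[\bm{x},\bm{y}]$ has rank $2$ almost surely, bound the bad event by the zero set of the single analytic minor $\phi=\text{Tanh}(x_1)\text{Tanh}(y_2)-\text{Tanh}(x_2)\text{Tanh}(y_1)$, check $\phi\not\equiv 0$ at one point, and invoke the standard fact that a non-trivial real-analytic function has a Lebesgue-null zero set. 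What this buys is a single citable lemma in place of several unproved measure-zero assertions, and it makes clear that nothing is special about $\text{Tanh}$ beyond being real-analytic and non-vanishing somewhere --- any analytic activation with that property satisfies the same statement. The one case your reduction skips is $N=1$, where no $2\times 2$ minor exists, but there the claim is immediate because $\text{Tanh}(t)=0$ only at $t=0$.
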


\begin{proof}
See Appendix.
\end{proof}

Theorem 1 shows that if we simply increase the depth based on GCN architecture, the extracted features $\bm{Y'}$ will at most encode stationary information of graph structure and lose all the information in node features. In addition, from the proof we see that the point-wise ReLU transformation is a conspirator. Theorem 2 tells us that Tanh is better in keeping linear independence among column features. We design a numerical experiment on synthetic data (see Appendix) to test, under a 100-layer GCN architecture, how activation functions affect the rank of the output in each hidden layer during the feed-forward process. As Figure 1(a) shows, the rank of hidden features decreases rapidly with ReLU, while having little fluctuation under Tanh, and even the identity function performs better than ReLU (see Appendix for more comparisons). So we propose to replace ReLU by Tanh.
\begin{figure*}[htbp]
\centering
\subfloat[Deep GCN]{
\captionsetup{justification = centering}
\includegraphics[width=0.32\textwidth]{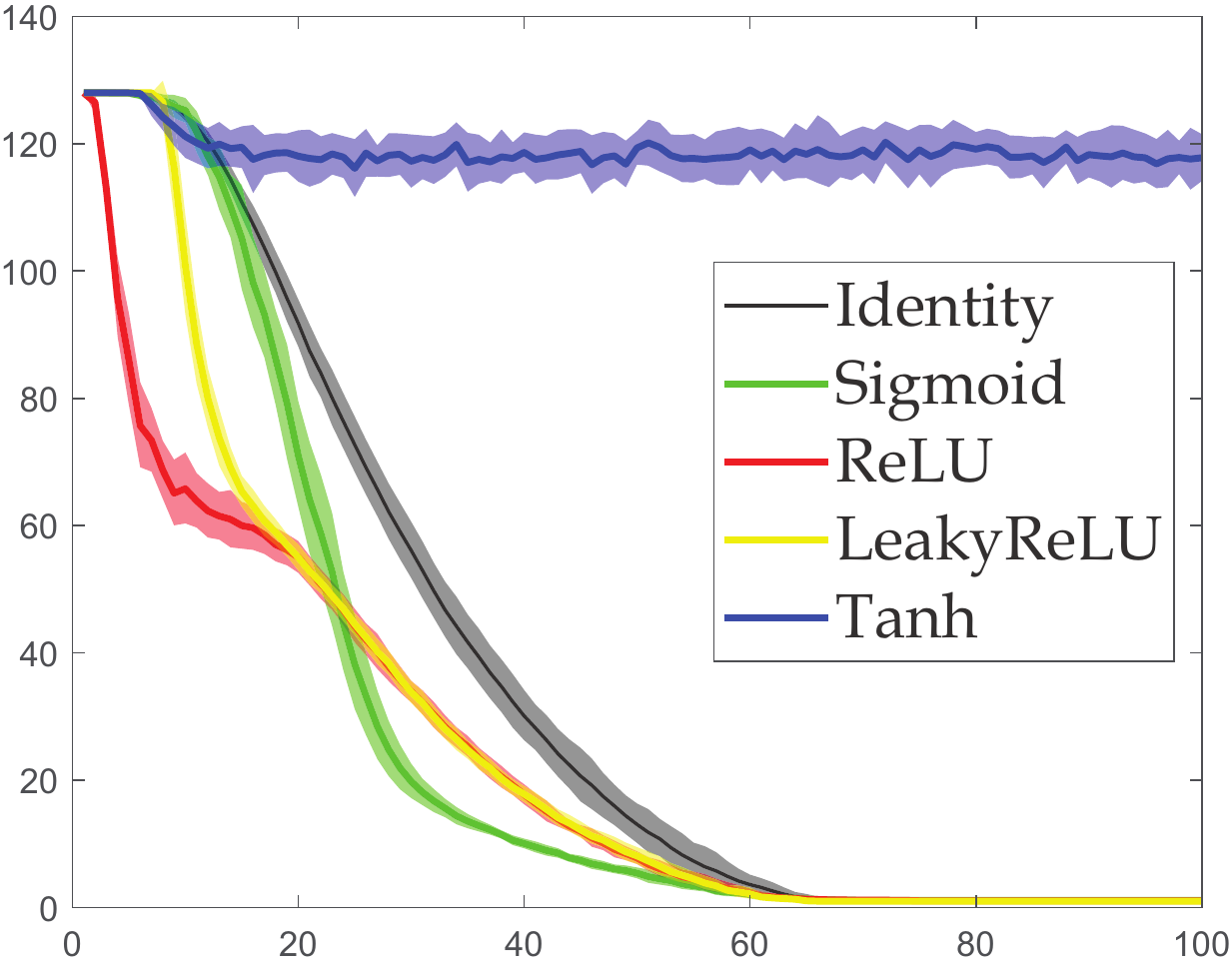}}
\hfill
\subfloat[Snowball]{
\captionsetup{justification = centering}
\includegraphics[width=0.32\textwidth]{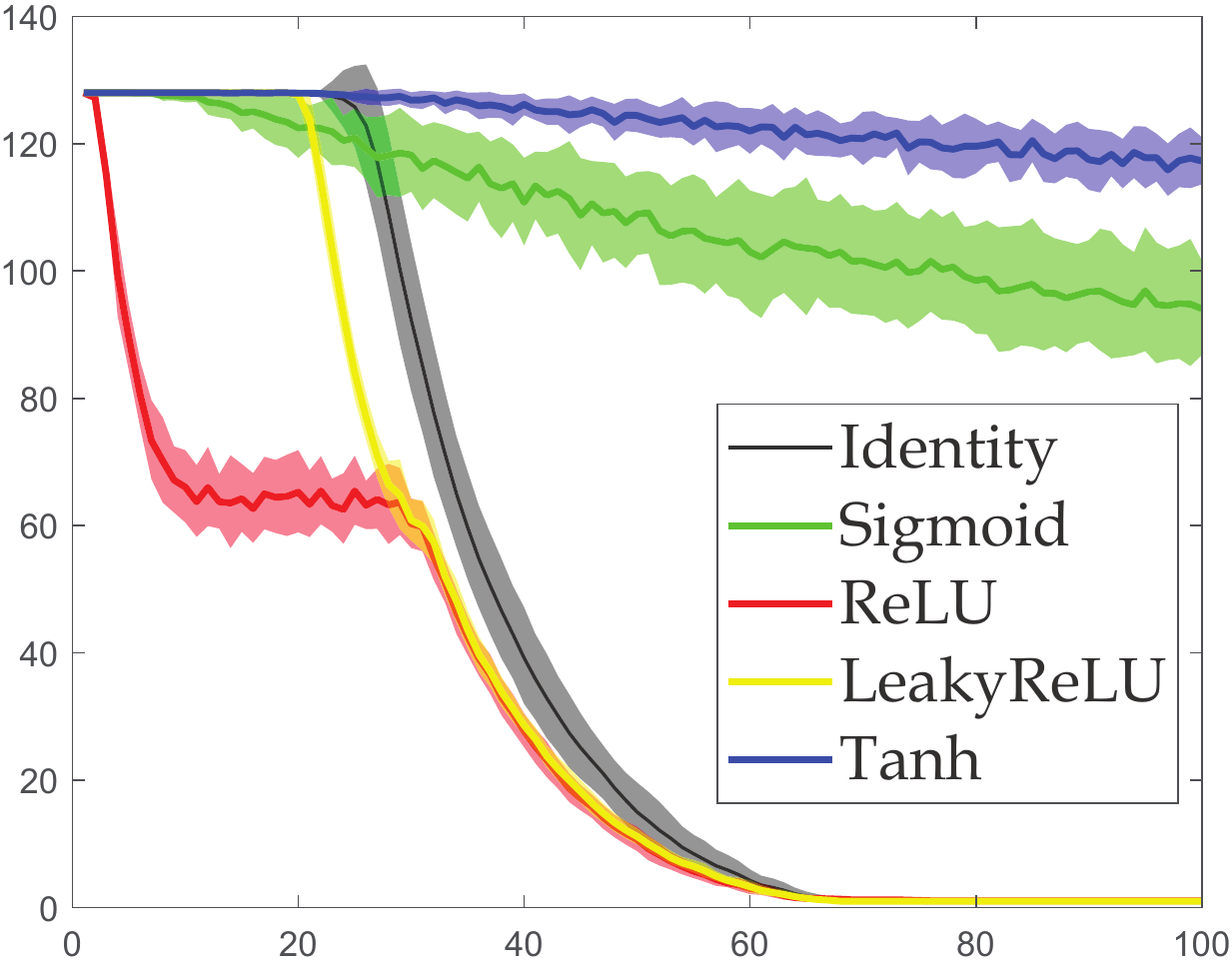}}
\hfill
\subfloat[Truncated Block Krylov]{
\captionsetup{justification = centering}
\includegraphics[width=0.32\textwidth]{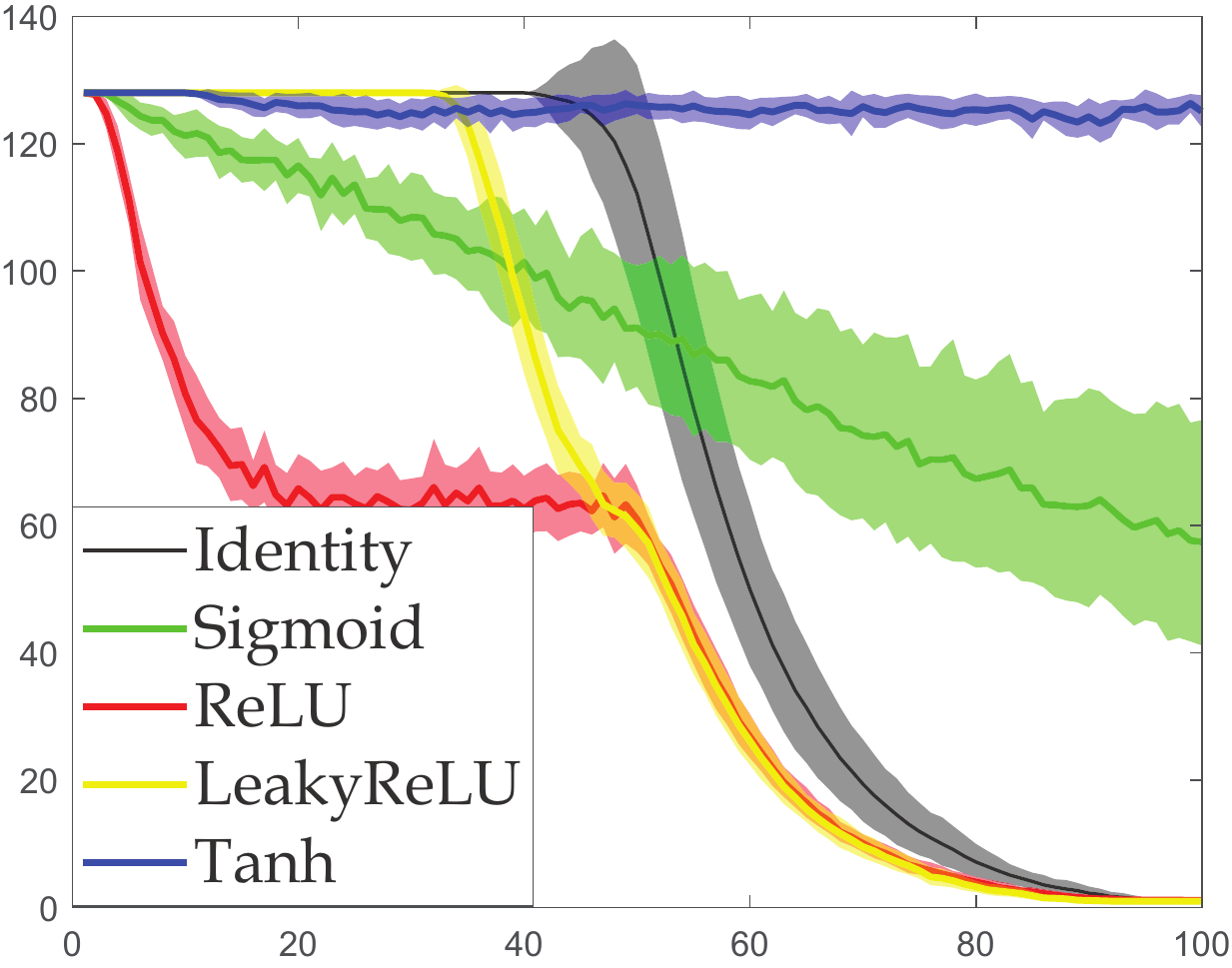}}
\caption{Number of independent column features }
\label{activation_functions}
\end{figure*}
\section{Spectral Graph Convolution and Block Krylov Subspace Methods}
\subsection{Notation and Backgrounds}
Let $\mathbb{S}$ be a block vector subspace of $\mathbb{R}^{F\times F}$ containing the identity matrix $I_F$ that is closed under matrix multiplication and transposition. We define an inner product $\langle\cdot, \cdot\rangle_{\mathbb{S}}$ in the block vector space $\mathbb{R}^{N \times F}$ as follows \cite{frommer2017block},
\begin{definition}
A mapping $\langle\cdot, \cdot\rangle_{\mathbb{S}}$ from $\mathbb{R}^{N\times F} \times \mathbb{R}^{N\times F}$ to $\mathbb{S} $ is called a block inner product onto $\mathbb{S}$ if it satisfies the following conditions for all $\bm{X, Y, Z} \in \mathbb{R}^{N\times F}$ and $C \in \mathbb{S}$:
\begin{enumerate}[leftmargin=12pt]
\item $\mathbb{S}$-linearity: $\langle \bm{X}, \bm{Y}C \rangle_{\mathbb{S}} =  \langle \bm{X}, \bm{Y}\rangle_{\mathbb{S}}C$  and $ \langle \bm{X} + \bm{Y}, \bm{Z} \rangle_{\mathbb{S}} = \langle\bm{X}, \bm{Z}\rangle_{\mathbb{S}} + \langle \bm{Y}, \bm{Z}\rangle_{\mathbb{S}}$
\item symmetry: $ \langle \bm{X}, \bm{Y}\rangle_{\mathbb{S}} = \langle\bm{Y}, \bm{X}\rangle_{\mathbb{S}}^T $
\item definiteness: $ \langle \bm{X}, \bm{X}\rangle_{\mathbb{S}} $ is positive definite if $\bm{X}$ has full rank, and $ \langle \bm{X}, \bm{X}\rangle_{\mathbb{S}} = 0_F$ if and only if $\bm{X} = 0.$
\end{enumerate}
\end{definition}
	
There are mainly three ways to define $\langle\cdot, \cdot\rangle_{\mathbb{S}}$, we use the classical one: $\mathbb{S}^{Cl} = \mathbb{R}^{F\times F}$  and $ \langle \bm{X}, \bm{Y}\rangle_{\mathbb{S}}^{Cl}= \bm{X}^{T} \bm{Y}$. 
We define a block vector subspace of $\mathbb{R}^{N\times F}$, which will be used later.
\begin{definition}
Given a set of block vectors $\{\bm{X}_k \}_{k=1}^m \subset \mathbb{R}^{N\times F} $, the $\mathbb{S}$-span of $\{\bm{X}_k \}_{k=1}^m$ is defined as $\mathrm{span}^{\mathbb{S}} \{\bm{X}_1, \dots, \bm{X}_m\} \vcentcolon = \{ \sum\limits_{k=1}^{m} \bm{X}_k C_k: C_k \in  \mathbb{S} \}$
\end{definition}
Given the above definition, the order-$m$ block Krylov subspace with respect to $A\in \mathbb{R}^{N\times N},\bm{B}\in \mathbb{R}^{N\times F}$, and  $\mathbb{S}$ can be defined as $\mathcal{K}_m^{\mathbb{S}} (A,\bm{B}) \vcentcolon = \text{span}^{\mathbb{S}} \{\bm{B}, A\bm{B}, \dots, A^{m-1} \bm{B}\} $. The corresponding block Krylov matrix is defined as $K_m (A,\bm{B})\vcentcolon = [ \bm{B}, A\bm{B}, \dots, A^{m-1} \bm{B}]$.

\subsection{Spectral Graph Convolution in Block Krylov Subspace Form}
\label{conv_in_krylov}
In this section, we will show that any graph convolution with well-defined analytic spectral filter defined on $\boldmath{L} \in \mathbb{R}^{N\times N}$ can be written as the product of a block Krylov matrix with a learnable parameter matrix in a specific form.

For any real analytic scalar function $g$, its power series expansion around center 0 is
$$g(x) = \sum\limits_{n=0}^\infty a_n x^n = \sum\limits_{n=0}^\infty \frac{g^{(n) } (0)}{n!} x^n, \; \abs{x} < R$$
where $R$ is the radius of convergence. We can define a filter by $g$.
Let $\rho(L)$ denote the spectrum radius of $L$ and suppose $\rho(L) <R$. The spectral filter $g(L)\in \mathbb{R}^{N \times N}$ can be defined as
$$g(L) \vcentcolon = \sum\limits_{n=0}^\infty a_n L^n =\sum\limits_{n=0}^\infty \frac{g^{(n) } (0)}{n!} L^n, \; \rho(L) < R$$
According to the definition of spectral graph convolution in \eqref{def}, graph signal $\bm{X}$ is filtered by $g(L)$ in the following way,
$$ g(L) \bm{X} = \sum\limits_{n=0}^\infty \frac{g^{(n) } (0)}{n!} L^n \bm{X} =\left[\bm{X}, L\bm{X}, L^2\bm{X}, \cdots  \right] \left[\frac{g^{(0) } (0)}{0!} {I_{F}} , \frac{g^{(1) } (0)}{1!}{I_{F}}  ,\frac{g^{(2) } (0)}{2!}{I_{F}}  , \cdots \right]^T = A'B'$$
where $A' \in \mathbf{R}^{N \times \infty}$ and $B' \in \mathbf{R}^{\infty \times F}$. It is easy to see that  $A'$ is a block Krylov matrix and Range($A'B'$) $\subseteq$ Range($A'$).  We know that there exists a smallest $m$ such that \cite{gutknecht2009block,frommer2017block}
\begin{equation} \label{krylov}
\text{span}^{\mathbb{S}} \{\bm{X}, L\bm{X}, L^2\bm{X}, \cdots  \} = \text{span}^{\mathbb{S}} \{\bm{X}, L\bm{X}, L^2\bm{X}, \dots,L^{m-1}\bm{X} \},
\end{equation}
\ie{} for any $k\geq m$, $L^k \bm{X} \in \mathcal{K}_m^{\mathbb{S}} (L,\bm{X})$. $m$ depends on $L$ and $\bm{X}$, so we will write it as $m(L,\bm{X})$ later, yet here we still use $m$ for simplicity. From \eqref{krylov}, the convolution can be written as
\begin{equation}\label{eq5}
g(L) \bm{X} = \sum\limits_{n=0}^\infty \frac{g^{(n) } (0)}{n!} L^n \bm{X} =\left[\bm{X}, L\bm{X}, L^2\bm{X}, \dots,L^{m-1}\bm{X}  \right] \left[ ({\Gamma_0}^{\mathbb{S}})^T  , ({\Gamma_1}^{\mathbb{S}})^T   , ({\Gamma_2}^{\mathbb{S}})^T ,\cdots ,({\Gamma_{m-1}^{\mathbb{S}}})^T  \right]^T \equiv  K_m (L,\bm{X}) \Gamma^{\mathbb{S}}
\end{equation}
where ${\Gamma_{i}^{\mathbb{S}}},\; i=1,\dots, m-1$ are parameter matrix blocks and ${\Gamma_{i}^{\mathbb{S}}} \in \mathbb{R}^{F\times F}$ under classical definition of inner product. Then
\begin{equation} \label{eq6}
g(L)\bm{X}W' =  K_m (L,\bm{X}) \Gamma^{\mathbb{S}} W' =  K_m (L,\bm{X}) W^{\mathbb{S}}
\end{equation}
where $W^{\mathbb{S}} \equiv \Gamma^{\mathbb{S}} W' \in \mathbb{R}^{mF \times O}$. The essential  number of learnable parameters is $mF\times O$. 
\subsection{Deep GCN in Block Krylov Subspace Form}
\label{deep_gcn_krylov}
Since the spectral graph convolution can be simplified as \eqref{eq5}\eqref{eq6}, we can build deep GCN in the following way.
	
Suppose we have a sequence of analytic spectral filters $G=\{ g_0, g_1, \dots, g_n\}$ and a sequence of point-wise nonlinear activation functions $H = \{h_0, h_1,\dots, h_n \}$. Then,
\begin{equation} \label{eq7}
\textbf{Y} = \mbox{softmax} \left\lbrace g_n(L) \; h_{n-1} \left\lbrace  \cdots g_2(L) \; h_1\left\lbrace g_1(L) \; h_0 \left\{ g_0(L) \bm{X} W_0' \right\} W_1' \right\rbrace  W_2' \cdots \right\rbrace W_n' \right\rbrace
\end{equation}
Let us define $\bm{H_0}=\bm{X}$ and $\bm{H_{i+1}}=h_{i} \{ g_i(L) \bm{H_i} W_i'\}, i = 0,\dots,n-1$. Then $\bm{Y}=\mbox{softmax}\{ g_n(L)\bm{H_n} W_n' \}$. From \eqref{eq6}\eqref{eq7}, we have an iterative relation that $\bm{H_{i+1}} = h_{i} \{K_{m_i} (L,\bm{H_i}) W_i^{\mathbb{S}}\},$ where $m_i = m(L,\bm{H_i})$. It is easy to see that, when $g_i(L) = I$, \eqref{eq7} is fully connected network \cite{li2018deeper}; when $g_i(L) = \tilde{A}, \; n=1$, it is just GCN \cite{kipf2016classification}; when $g_i(L)$ is defined by Chebyshev polynomial \cite{hammond2011wavelets}, $W_i' = I$ and under the global inner product, \eqref{eq7} is ChebNet \cite{defferrard2016fast}.

\subsection{Difficulties in Computation}
\label{difficulty}
In the last subsection, we gave a general form of deep GCN in block Krylov form. Following this idea, we can leverage the existing block Arnoldi (Lanczos) algorithm \cite{frommer2017block, frommer2017radau} to compute orthogonal basis of $\mathcal{K}_{m_i}^{\mathbb{S}} (L,\bm{H_i})$ and find $m_i$. But there are some difficulties in practice:
\begin{enumerate}[leftmargin=12pt]
\item During the training phase, $\bm{H_i}$ changes every time that parameters are updated. This makes $m_i$ become a variable and thus requires adaptive size for parameter matrices.
\item 
For classical inner product, the $QR$ factorization that is needed in block Arnoldi algorithm \cite{frommer2017block} is difficult to be put into backpropagation framework.
\end{enumerate}

Although direct implementation of block Krylov methods in GCN is hard, it inspires us that if we have a good way to stack multi-scale information in each hidden layer, the network will have the ability to be extended to deep architectures. We propose a way to alleviate the difficulties in section \ref{deep}.
\section{Deep GCN Architectures}
\label{deep}
Upon the analyses in the last section, we propose two architectures: \textit{snowball} and \textit{truncated Krylov}. These methods concatenate multi-scale feature information in hidden layers differently while both having the potential to be scaled to deeper architectures.
\subsection{Snowball}
\label{snowball_gcn}

In order to concatenate multi-scale features together and get a richer representation for each node, we design a densely connected graph network (Figure \ref{deep_gcn}(a)) as follows,
\begin{align}
\label{snowball}
&\bm{H_0} = \bm{X},\; \bm{H_{l+1}} = f \left( L \left[ \bm{H_0}, \bm{H_1},\dots, \bm{H_l} \right] W_l \right),\; l=0,1,2,\dots,n - 1 \nonumber \\
& \bm{C} = g \left( \left[ \bm{H_0}, \bm{H_1},\dots, \bm{H_n} \right] W_n \right)\\
&output = \text{softmax} \left( L^{p} \bm{C} W_C \right) \nonumber
\end{align}
where $W_l \in \mathbb{R}^{\left( \sum\limits_{i=0}^l F_i \right)  \times F_{l+1} }, W_n \in \mathbb{R}^{\left( \sum\limits_{i=0}^n F_i \right)  \times F_C}, W_C \in \mathbb{R}^{F_C \times F_O}$ are learnable parameter matrices, $F_{l+1}$ is the number of output channels in layer $l$; $f, g$ are point-wise activation functions; $C$ is a classifier of any kind; $p \in \{0,1\}$. $\bm{H_0}, \bm{H_1},\dots, \bm{H_n}$ are extracted features. $C$ can be a fully connected neural network or even an identity layer with $C=[\bm{H_0}, \bm{H_1},\dots, \bm{H_n}]$. When $p=0$, $L^p=I$ and when $p=1$, $L^P=L$, which means that we project $C$ back onto graph Fourier basis which is necessary when graph structure encodes much information. Following this construction, we can stack all learned features as the input of the subsequent hidden layer, which is an efficient way to concatenate multi-scale information. The size of input will grow like a snowball and this construction is similar to DenseNet \cite{huang2017densely}, which is designed for regular grids (images). Thus, some advantages of DenseNet are naturally inherited, \eg{} alleviate the vanishing-gradient problem, encourage feature reuse, increase the variation of input for each hidden layer, reduce the number of parameters, strengthen feature propagation and improve model compactness.

\subsection{Truncated Krylov}

As stated in Section \ref{difficulty}, the fact that $m_i$ is a variable makes GCN difficult to be merged into the block Krylov framework. But we can make a compromise and set $m_i$ as a hyperparameter. Then we can get a truncated block Krylov network (Figure \ref{deep_gcn}(b)) as shown below,
\begin{align}
&\bm{H_0} = \bm{X}, \; \bm{H_{l+1}}=f \left( \left[ \bm{H_l}, L \bm{H_l} \dots, L^{m_l -1} \bm{H_l} \right] W_l   \right), \; l=0,1,2,\dots,n - 1 \nonumber \\
& \bm{C} = g \left( \bm{H_n} W_n   \right)\\
&output = \text{softmax} \left(L^{p} \bm{C} W_C \right) \nonumber
\end{align}
where $W_l \in \mathbb{R}^{\left( m_l F_l \right)  \times F_{l+1} }, W_n \in \mathbb{R}^{ F_n \times F_C}, W_C \in \mathbb{R}^{F_C \times F_O}$ are learnable parameter matrices, $f$ and $g$ are activation functions, and $p \in \{0,1\}$.

There are many works on the analysis of error bounds of doing truncation in block Krylov methods \cite{frommer2017block}. But the results need many assumptions either on $\bm{X}$, \eg{} $\bm{X}$ is a standard Gaussian matrix \cite{ wang2015improved}, or on  $L$, \eg{} some conditions on the smallest and largest eigenvalues of $L$  have to be satisfied \cite{musco2018stability}. Instead of doing truncation for a specific function or a fixed $\bm{X}$, we are dealing with variable $\bm{X}$ during training. So we cannot put any restriction on $\bm{X}$ and its relation to $L$ to get a practical error bound.

\begin{figure*}[htbp]
\centering
\subfloat[Snowball]{
\captionsetup{justification = centering}
\includegraphics[width=0.49\textwidth]{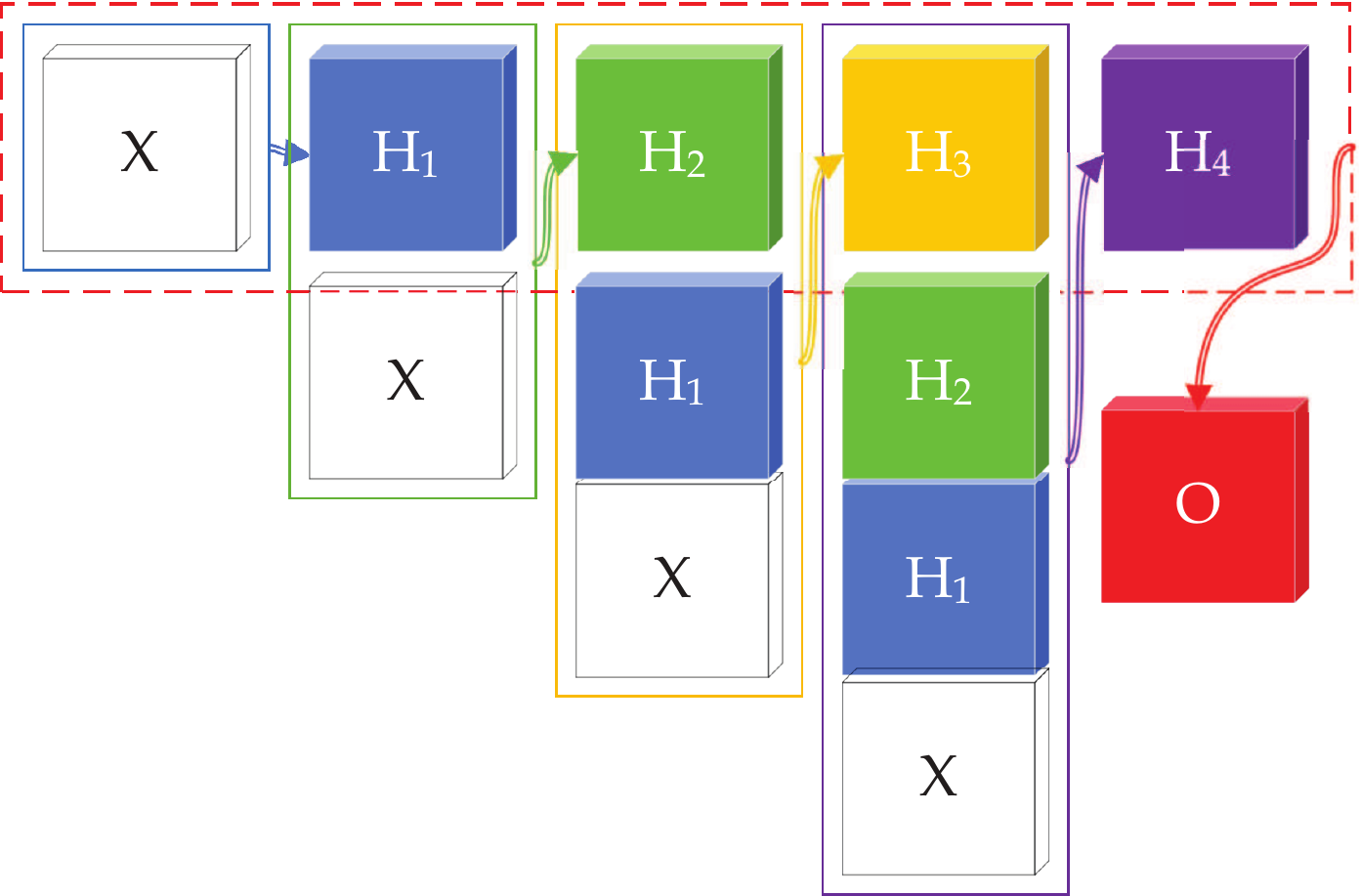}}
\hfill
\subfloat[Truncated Block Krylov]{
\captionsetup{justification = centering}
\includegraphics[width=0.49\textwidth]{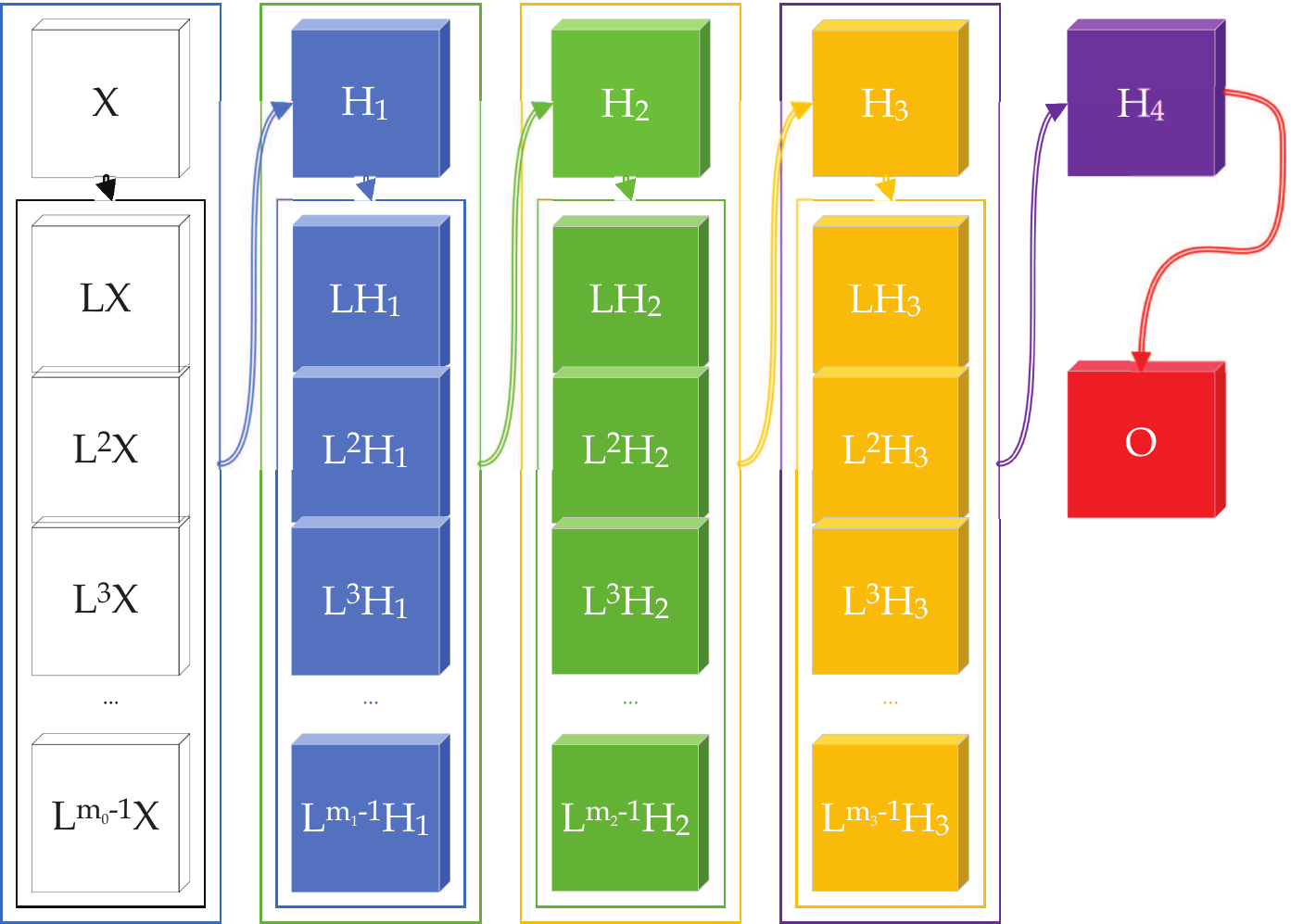}}
\caption{Deep GCN Architectures}
\label{deep_gcn}
\end{figure*}

Here we would like to mention \cite{liao2019lanczos}, which proposes to do low-rank approximation of $L$ by the Lanczos algorithm. The problem with this technique is that information in $L$ will be lost if $L$ is actually not low-rank. If we increase the Lanczos step to keep more information, it will hurt the efficiency of the algorithm. Since most of the graphs we are dealing with have sparse connectivity structures, they are actually not low-rank, \eg{} Erd\H{o}s-R\'enyi graph $G(n,p)$ with $p = \omega(\frac{1}{n})$ \cite{tran2013sparse} and Appendix \rom{4}.
Thus we did not do low-rank approximation in our computations.

\subsection{Equivalence of Linear Snowball GCN and Truncated Block Krylov Network}
\label{linear_snowball}

Under the snowball GCN architecture, the identity function outperforms ReLU as shown in Figure \ref{activation_functions} and it is easier to train than Tanh. In this part, we will show that a multi-layer linear snowball GCN with identity function as $f$, identity layer as $C$ and $p=1$ is equivalent to a 1-layer block Krylov network with identity layer $C$, $p =1$ and a special parameter matrix.

We write $W_i$ as $W_i=\left[ (\bm{W_i^1})^T, \cdots , (\bm{W_i^{i+1}})^T \right]^T$ and follow \eqref{snowball} we have
\begin{equation*}
 \bm{H_0} = \bm{X}, \; \bm{H_1} = L\bm{X} W_0, \; \bm{H_2} = L[\bm{X}, \bm{H_1} ]W_1 = L\bm{X} \bm{W_1^1} +  L^2 \bm{X}\bm{ W_0 W_1^2} = L [\bm{X}, L\bm{X}]
 \begin{bmatrix}
 \bm{I} & 0\\
 0 & \bm{W_0}
\end{bmatrix} \colvec{ \bm{W_1^1} \\ \bm{W_1^2} }, \dots 
\end{equation*}
As in \eqref{snowball}, we have $L\bm{C}W_C = L[\bm{H_0}, \bm{H_1},\dots, \bm{H_n}] W_C$. Thus we can write
\begin{align*}
 & [\bm{H_0}, \bm{H_1} \cdots,  \bm{H_n}] \\
  =\ &  \ [\bm{X}, L\bm{X}, \cdots,  L^{n} \bm{X}]
\begin{bmatrix}
 \bm{I} & 0 & \cdots & 0 \\[6pt]
 0 & \bm{I} & \cdots & 0 \\[0pt]
 \vdots & \vdots & \ddots  & \vdots\\[6pt]
 0 & 0 & \cdots  & \bm{W_0} \\
\end{bmatrix}
\begin{bmatrix}
\bm{I} & 0 & \cdots & 0 \\[6pt]
0 & \bm{I} & \cdots & 0 \\[0pt]
 \vdots & \vdots & \ddots  & \vdots\\[6pt]
 0 & 0 & \cdots &  \bm{{W_1^1}} \\
\end{bmatrix}
\cdots
\begin{bmatrix}
 \bm{I} & 0 & \cdots & 0\\[6pt]
 0 & \bm{W_{n-1}^{n}} & \cdots & 0 \\[0pt]
 \vdots & \vdots & \ddots & \vdots\\[6pt]
 0 & 0 & \cdots &  \bm{W_{n-1}^1} \\
\end{bmatrix}
\end{align*}
which is in the form of \eqref{eq6}, where the parameter matrix is the multiplication of a sequence of block diagonal matrices whose entries consist of identity blocks and blocks from other parameter matrices.

\subsection{Intuition behind Multi-scale Information Concatenation}
For each node $v$, denote $N(v)$ as the set of its neighbors. Then $L \bm{X}(v,:)$ can be interpreted as a weighted mean of $v$ and $N(v)$. If the networks goes deep as \eqref{eq1}, $Y'(v,:)$ becomes the weighted mean of $v$ and its $n-$hop neighbors (not exactly mean because we have ReLU in each layer). As the scope grows, the nodes in the same connected component tend to have the same (global) features, while losing their individual (local) features, which makes them indistinguishable. Although it is reasonable to assume that the nodes in the same cluster share many similar properties, it will be harmful if we ignore the "personal" differences between each node.

Therefore, to get a richer representation of each node, we propose to concatenate the multi-scale information together and the most naive architecture is the densely connected one. Truncated Krylov network works because in each layer $i$, we start the concatenation from $L^0 H_i$. By this way, the local information will not be diluted in each layer.
\section{Experiments}\label{sec:experiments}
We test linear snowball GCN ($f=g$=identity, $p=1$), snowball GCN ($f$=Tanh, $g$=identity, $p=1$) and truncated block Krylov network ($f=g$=Tanh, $p=0$) on public splits \cite{yang2016revisiting,liao2019lanczos} of Cora, Citeseer and PubMed\footnote{Source code to be found at \url{https://github.com/PwnerHarry/Stronger_GCN}}, as well as several smaller splits to increase the difficulty of the tasks \cite{liao2019lanczos,li2018deeper,sun2019stage}. We compare against several methods which allow validation, including graph convolutional networks for fingerprint (GCN-FP) \cite{duvenaud2015convolutional}, gated graph neural networks (GGNN) \cite{li2015gated}, diffusion convolutional neural networks (DCNN) \cite{atwood2015diffusion}, Chebyshev networks (Cheby) \cite{defferrard2016fast}, graph convolutional networks (GCN) \cite{kipf2016classification}, message passing neural networks (MPNN) \cite{gilmer2017neural}, graph sample and aggregate (GraphSAGE) \cite{hamilton2017inductive}, graph partition neural networks (GPNN) \cite{liao2018graph}, graph attention networks (GAT) \cite{velivckovic2017attention}, LanczosNet (LNet) \cite{liao2019lanczos} and AdaLanczosNet (AdaLNet) \cite{liao2019lanczos}. We also compare against some methods that do no use validation, including label propagation using ParWalks (LP) \cite{wu2012learning}, Co-training \cite{li2018deeper}, Self-training \cite{li2018deeper}, Union \cite{li2018deeper}, Intersection \cite{li2018deeper}, GCN without validation \cite{li2018deeper}, Multi-stage training \cite{sun2019stage}, Multi-stage self-supervised (M3S) training \cite{sun2019stage}, GCN with sparse virtual adversarial training (GCN-SVAT) \cite{sun2019virtual} and GCN with dense virtual adversarial training (GCN-DVAT) \cite{sun2019virtual}.

\begin{figure*}[htbp]
\centering
\subfloat[Linear Snowball]{
\captionsetup{justification = centering}
\includegraphics[width=0.32\textwidth]{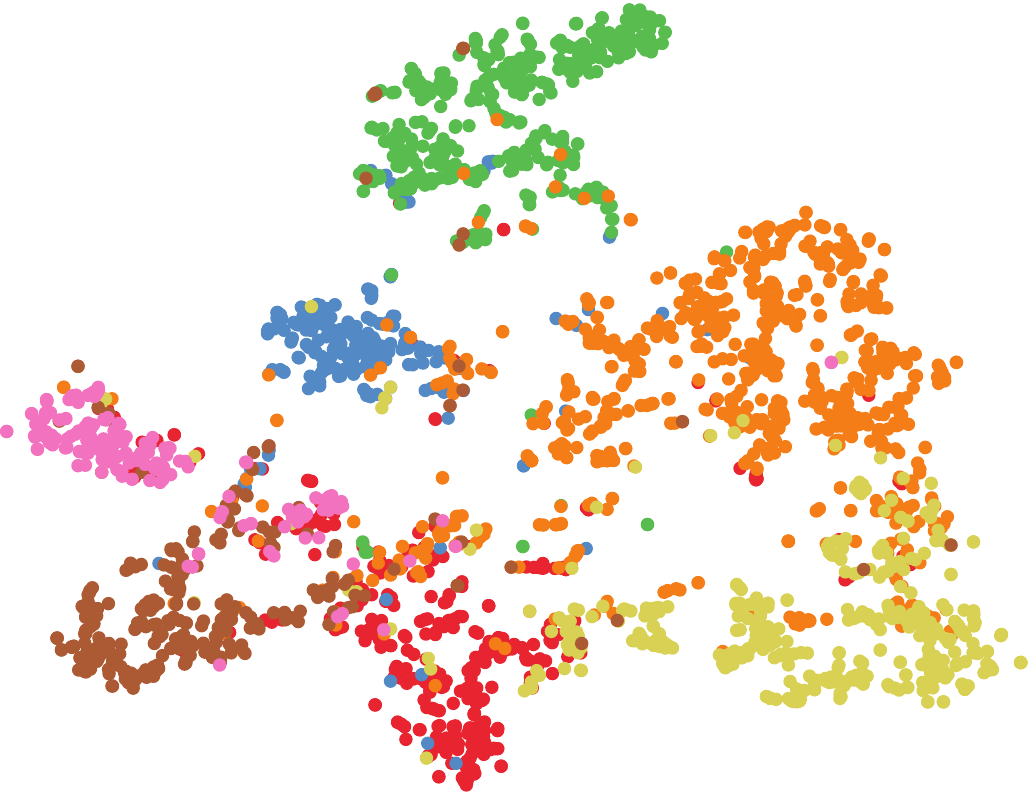}}
\hfill
\subfloat[Snowball]{
\captionsetup{justification = centering}
\includegraphics[width=0.32\textwidth]{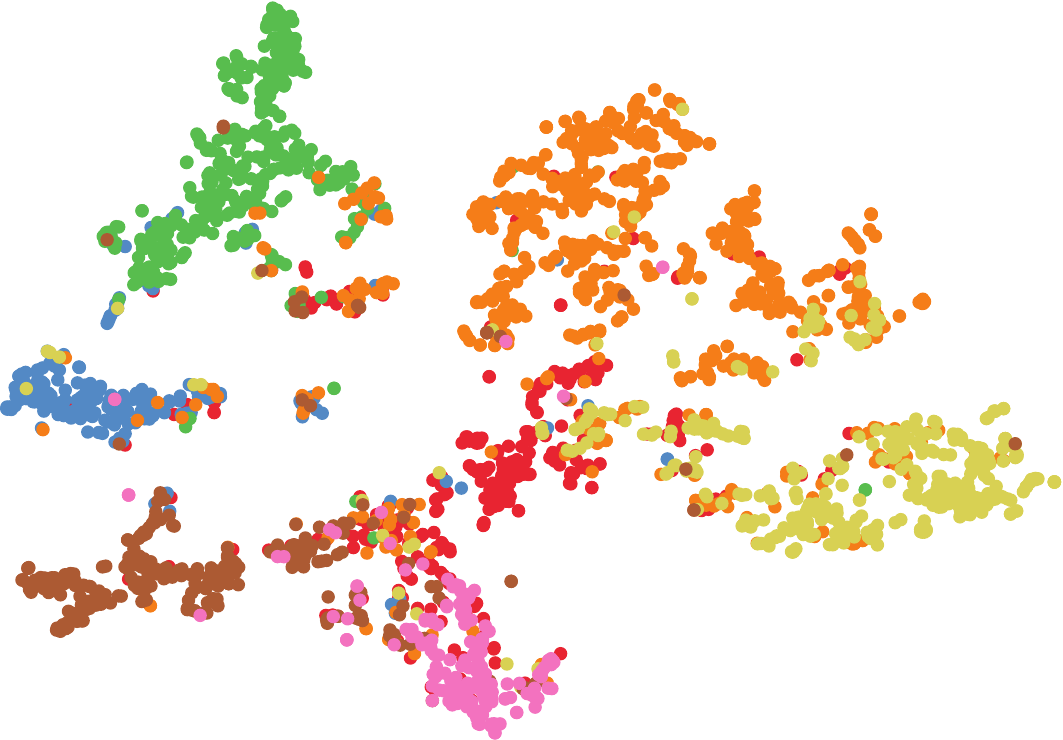}}
\hfill
\subfloat[Truncated Krylov]{
\captionsetup{justification = centering}
\includegraphics[width=0.32\textwidth]{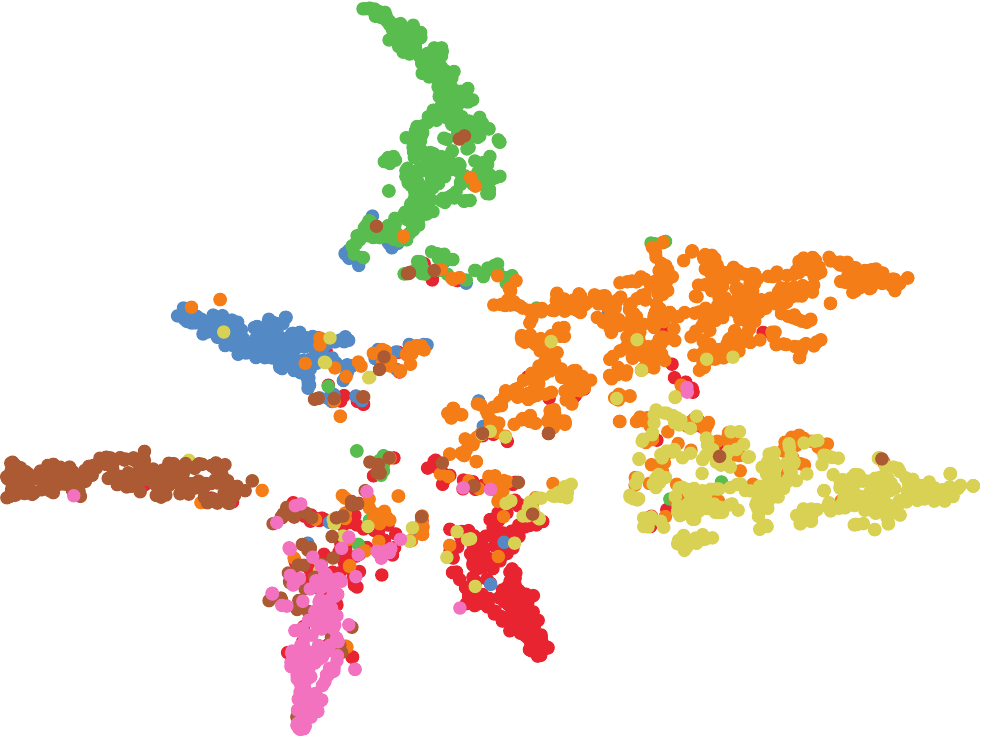}}
\caption{t-SNE for the extracted features trained on Cora (7 classes) public (5.2\%) using the best hyperparameter settings.}
\label{tsne}
\end{figure*}

\par
For each test case, we use the best hyperparameters to run $10$ independent times to get the average accuracy. The hyperparameters include learning rate and weight decay for the optimizer RMSprop or Adam, taking values in the intervals $[{10}^{-6}, 5\times{10}^{-3}]$ and $[{10}^{-5}, {10}^{-2}]$, respectively, width of hidden layers taking value in the set $\{50, \cdots, 5000\}$, number of hidden layers in the set $\{1, 2, \dots, 15\}$, dropout in $(0, 0.99]$, and the number of Krylov blocks taking value in $\{4, 5, \dots, 30\}$. The hyperparameter values of the test cases will be presented in the appendix.

To get achieve good training, we use adaptive number of episodes (but no more than $3000$): the early stopping counter is set to be 100.

We see that the proposed architectures achieve overwhelming performance in \textit{ALL} test cases. It is particularly worth noting that when the training sets are small, the proposed architectures perform astonishingly better than the existing methods. From the t-SNE\cite{maaten2008visualizing} visualization of the output layers in Figure \ref{tsne}, we see that the architectures extract good features with small training data, especially for truncated Krylov. What also impresses us is that linear Snowball GCN can achieve state-of-the-art performance with much less computational cost.

\begin{table*}[htbp]
\setlength{\tabcolsep}{1pt}
  \centering
  \caption{Accuracy Without Validation}
  \scriptsize
    \begin{tabular}{c|cccccc|cccccc|cccc}
    \multirow{2}[1]{*}{Algorithms} & \multicolumn{6}{c|}{Cora}                     & \multicolumn{6}{c|}{CiteSeer}                 & \multicolumn{4}{c}{PubMed} \\
          & 0.5\% & 1\%   & 2\%   & 3\%   & 4\%   & 5\%   & 0.5\% & 1\%   & 2\%   & 3\%   & 4\%   & 5\%   & 0.03\% & 0.05\% & 0.1\% & 0.3\% \\
    \midrule
    LP    & \cellcolor[rgb]{ .996,  .918,  .514}56.4 & \cellcolor[rgb]{ .992,  .796,  .494}62.3 & \cellcolor[rgb]{ .976,  .545,  .443}65.4 & \cellcolor[rgb]{ .973,  .412,  .42}67.5 & \cellcolor[rgb]{ .973,  .412,  .42}69.0 & \cellcolor[rgb]{ .973,  .412,  .42}70.2 & \cellcolor[rgb]{ .976,  .514,  .439}34.8 & \cellcolor[rgb]{ .973,  .412,  .42}40.2 & \cellcolor[rgb]{ .973,  .412,  .42}43.6 & \cellcolor[rgb]{ .973,  .412,  .42}45.3 & \cellcolor[rgb]{ .973,  .412,  .42}46.4 & \cellcolor[rgb]{ .973,  .412,  .42}47.3 & \cellcolor[rgb]{ .812,  .867,  .51}61.4 & \cellcolor[rgb]{ .82,  .871,  .51}66.4 & \cellcolor[rgb]{ .992,  .796,  .49}65.4 & \cellcolor[rgb]{ .973,  .412,  .42}66.8 \\
    Cheby & \cellcolor[rgb]{ .973,  .412,  .42}38.0 & \cellcolor[rgb]{ .973,  .412,  .42}52.0 & \cellcolor[rgb]{ .973,  .412,  .42}62.4 & \cellcolor[rgb]{ .98,  .596,  .455}70.8 & \cellcolor[rgb]{ .984,  .678,  .471}74.1 & \cellcolor[rgb]{ .992,  .792,  .49}77.6 & \cellcolor[rgb]{ .973,  .412,  .42}31.7 & \cellcolor[rgb]{ .976,  .482,  .431}42.8 & \cellcolor[rgb]{ .988,  .773,  .486}59.9 & \cellcolor[rgb]{ .996,  .863,  .506}66.2 & \cellcolor[rgb]{ .996,  .882,  .51}68.3 & \cellcolor[rgb]{ .996,  .886,  .51}69.3 & \cellcolor[rgb]{ .973,  .412,  .42}40.4 & \cellcolor[rgb]{ .973,  .412,  .42}47.3 & \cellcolor[rgb]{ .973,  .412,  .42}51.2 & \cellcolor[rgb]{ .984,  .698,  .475}72.8 \\
    Co-training & \cellcolor[rgb]{ 1,  .922,  .518}56.6 & \cellcolor[rgb]{ .949,  .91,  .518}66.4 & \cellcolor[rgb]{ .996,  .914,  .514}73.5 & \cellcolor[rgb]{ .996,  .882,  .51}75.9 & \cellcolor[rgb]{ .961,  .914,  .518}78.9 & \cellcolor[rgb]{ .871,  .886,  .514}80.8 & \cellcolor[rgb]{ .996,  .922,  .518}47.3 & \cellcolor[rgb]{ .992,  .847,  .502}55.7 & \cellcolor[rgb]{ .992,  .82,  .498}62.1 & \cellcolor[rgb]{ .992,  .78,  .49}62.5 & \cellcolor[rgb]{ .992,  .8,  .494}64.5 & \cellcolor[rgb]{ .992,  .804,  .494}65.5 & \cellcolor[rgb]{ .765,  .855,  .506}62.2 & \cellcolor[rgb]{ .671,  .827,  .502}68.3 & \cellcolor[rgb]{ .725,  .843,  .502}72.7 & \cellcolor[rgb]{ .824,  .871,  .51}78.2 \\
    Self-training & \cellcolor[rgb]{ .992,  .843,  .502}53.7 & \cellcolor[rgb]{ .969,  .914,  .518}66.1 & \cellcolor[rgb]{ .988,  .918,  .518}73.8 & \cellcolor[rgb]{ .929,  .902,  .514}77.2 & \cellcolor[rgb]{ .882,  .89,  .514}79.4 & \cellcolor[rgb]{ .996,  .914,  .514}80.0 & \cellcolor[rgb]{ .992,  .792,  .49}43.3 & \cellcolor[rgb]{ .996,  .914,  .514}58.1 & \cellcolor[rgb]{ .729,  .843,  .502}68.2 & \cellcolor[rgb]{ .812,  .871,  .51}69.8 & \cellcolor[rgb]{ .886,  .89,  .514}70.4 & \cellcolor[rgb]{ .953,  .91,  .518}71.0 & \cellcolor[rgb]{ .988,  .745,  .482}51.9 & \cellcolor[rgb]{ .988,  .753,  .482}58.7 & \cellcolor[rgb]{ .992,  .835,  .498}66.8 & \cellcolor[rgb]{ .996,  .898,  .51}77.0 \\
    Union & \cellcolor[rgb]{ .929,  .902,  .514}58.5 & \cellcolor[rgb]{ .733,  .847,  .506}69.9 & \cellcolor[rgb]{ .796,  .863,  .506}75.9 & \cellcolor[rgb]{ .765,  .855,  .506}78.5 & \cellcolor[rgb]{ .722,  .843,  .502}80.4 & \cellcolor[rgb]{ .702,  .835,  .502}81.7 & \cellcolor[rgb]{ .996,  .89,  .51}46.3 & \cellcolor[rgb]{ .937,  .906,  .518}59.1 & \cellcolor[rgb]{ .976,  .918,  .518}66.7 & \cellcolor[rgb]{ .996,  .871,  .506}66.7 & \cellcolor[rgb]{ .996,  .871,  .506}67.6 & \cellcolor[rgb]{ .996,  .863,  .506}68.2 & \cellcolor[rgb]{ .976,  .914,  .518}58.4 & \cellcolor[rgb]{ .996,  .914,  .514}64.0 & \cellcolor[rgb]{ .925,  .902,  .514}70.7 & \cellcolor[rgb]{ .588,  .804,  .494}79.2 \\
    Intersection & \cellcolor[rgb]{ .988,  .733,  .478}49.7 & \cellcolor[rgb]{ .996,  .898,  .51}65.0 & \cellcolor[rgb]{ .996,  .886,  .51}72.9 & \cellcolor[rgb]{ .941,  .906,  .518}77.1 & \cellcolor[rgb]{ .882,  .89,  .514}79.4 & \cellcolor[rgb]{ .984,  .918,  .518}80.2 & \cellcolor[rgb]{ .992,  .78,  .49}42.9 & \cellcolor[rgb]{ .937,  .906,  .518}59.1 & \cellcolor[rgb]{ .663,  .824,  .498}68.6 & \cellcolor[rgb]{ .749,  .851,  .506}70.1 & \cellcolor[rgb]{ .784,  .863,  .506}70.8 & \cellcolor[rgb]{ .89,  .89,  .514}71.2 & \cellcolor[rgb]{ .988,  .749,  .482}52.0 & \cellcolor[rgb]{ .988,  .773,  .486}59.3 & \cellcolor[rgb]{ .996,  .914,  .514}69.7 & \cellcolor[rgb]{ .965,  .914,  .518}77.6 \\
    MultiStage & \cellcolor[rgb]{ .831,  .875,  .51}61.1 & \cellcolor[rgb]{ .996,  .851,  .502}63.7 & \cellcolor[rgb]{ .933,  .902,  .514}74.4 & \cellcolor[rgb]{ .996,  .89,  .51}76.1 & \cellcolor[rgb]{ .992,  .843,  .502}77.2 &       & \cellcolor[rgb]{ .714,  .839,  .502}53.0 & \cellcolor[rgb]{ .996,  .906,  .514}57.8 & \cellcolor[rgb]{ .996,  .859,  .506}63.8 & \cellcolor[rgb]{ .996,  .902,  .514}68.0 & \cellcolor[rgb]{ .996,  .898,  .51}69.0 &       & \cellcolor[rgb]{ .996,  .906,  .514}57.4 & \cellcolor[rgb]{ .988,  .922,  .518}64.3 & \cellcolor[rgb]{ .976,  .918,  .518}70.2 &  \\
    M3S   & \cellcolor[rgb]{ .816,  .871,  .51}61.5 & \cellcolor[rgb]{ .902,  .894,  .514}67.2 & \cellcolor[rgb]{ .824,  .871,  .51}75.6 & \cellcolor[rgb]{ .855,  .882,  .51}77.8 & \cellcolor[rgb]{ .996,  .886,  .51}78.0 &       & \cellcolor[rgb]{ .561,  .796,  .494}56.1 & \cellcolor[rgb]{ .698,  .835,  .502}62.1 & \cellcolor[rgb]{ .996,  .918,  .514}66.4 & \cellcolor[rgb]{ .706,  .839,  .502}70.3 & \cellcolor[rgb]{ .863,  .882,  .51}70.5 &       & \cellcolor[rgb]{ .929,  .902,  .514}59.2 & \cellcolor[rgb]{ .98,  .918,  .518}64.4 & \cellcolor[rgb]{ .937,  .906,  .518}70.6 &  \\
    GCN   & \cellcolor[rgb]{ .976,  .537,  .443}42.6 & \cellcolor[rgb]{ .98,  .596,  .455}56.9 & \cellcolor[rgb]{ .984,  .655,  .463}67.8 & \cellcolor[rgb]{ .992,  .824,  .498}74.9 & \cellcolor[rgb]{ .996,  .863,  .506}77.6 & \cellcolor[rgb]{ .996,  .878,  .506}79.3 & \cellcolor[rgb]{ .973,  .467,  .427}33.4 & \cellcolor[rgb]{ .98,  .588,  .451}46.5 & \cellcolor[rgb]{ .992,  .831,  .498}62.6 & \cellcolor[rgb]{ .996,  .875,  .506}66.9 & \cellcolor[rgb]{ .996,  .894,  .51}68.7 & \cellcolor[rgb]{ .996,  .894,  .51}69.6 & \cellcolor[rgb]{ .98,  .584,  .451}46.4 & \cellcolor[rgb]{ .973,  .482,  .431}49.7 & \cellcolor[rgb]{ .976,  .549,  .443}56.3 & \cellcolor[rgb]{ .996,  .878,  .51}76.6 \\
    GCN-SVAT & \cellcolor[rgb]{ .98,  .565,  .447}43.6 & \cellcolor[rgb]{ .973,  .482,  .431}53.9 & \cellcolor[rgb]{ .992,  .82,  .498}71.4 & \cellcolor[rgb]{ .996,  .863,  .506}75.6 & \cellcolor[rgb]{ .996,  .902,  .514}78.3 & \cellcolor[rgb]{ .992,  .835,  .498}78.5 & \cellcolor[rgb]{ .996,  .914,  .514}47.0 & \cellcolor[rgb]{ .988,  .753,  .482}52.4 & \cellcolor[rgb]{ .996,  .902,  .514}65.8 & \cellcolor[rgb]{ .996,  .914,  .514}68.6 & \cellcolor[rgb]{ .996,  .91,  .514}69.5 & \cellcolor[rgb]{ .996,  .918,  .514}70.7 & \cellcolor[rgb]{ .988,  .749,  .482}52.1 & \cellcolor[rgb]{ .984,  .702,  .475}56.9 & \cellcolor[rgb]{ .988,  .745,  .482}63.5 & \cellcolor[rgb]{ .996,  .906,  .514}77.2 \\
    GCN-DVAT & \cellcolor[rgb]{ .988,  .714,  .475}49 & \cellcolor[rgb]{ .992,  .78,  .49}61.8 & \cellcolor[rgb]{ .992,  .839,  .502}71.9 & \cellcolor[rgb]{ .996,  .882,  .51}75.9 & \cellcolor[rgb]{ .996,  .906,  .514}78.4 & \cellcolor[rgb]{ .992,  .843,  .502}78.6 & \cellcolor[rgb]{ .788,  .863,  .506}51.5 & \cellcolor[rgb]{ .984,  .918,  .518}58.5 & \cellcolor[rgb]{ .859,  .882,  .51}67.4 & \cellcolor[rgb]{ .937,  .906,  .518}69.2 & \cellcolor[rgb]{ .784,  .863,  .506}70.8 & \cellcolor[rgb]{ .859,  .882,  .51}71.3 & \cellcolor[rgb]{ .992,  .784,  .49}53.3 & \cellcolor[rgb]{ .988,  .753,  .482}58.6 & \cellcolor[rgb]{ .992,  .82,  .498}66.3 & \cellcolor[rgb]{ .996,  .914,  .514}77.3 \\
    \midrule
    \textit{\textbf{linear Snowball}} & \cellcolor[rgb]{ .518,  .784,  .49}\textit{\textbf{69.5}} & \cellcolor[rgb]{ .475,  .773,  .49}\textit{\textbf{74.1}} & \cellcolor[rgb]{ .471,  .769,  .49}\textit{\textbf{79.4}} & \cellcolor[rgb]{ .529,  .788,  .494}\textit{\textbf{80.4}} & \cellcolor[rgb]{ .576,  .8,  .494}\textit{\textbf{81.3}} & \cellcolor[rgb]{ .608,  .812,  .498}\textit{\textbf{82.2}} & \cellcolor[rgb]{ .529,  .788,  .494}\textit{\textbf{56.8}} & \cellcolor[rgb]{ .431,  .761,  .486}\textit{\textbf{65.4}} & \cellcolor[rgb]{ .631,  .816,  .498}\textit{\textbf{68.8}} & \cellcolor[rgb]{ .561,  .796,  .494}\textit{\textbf{71.0}} & \cellcolor[rgb]{ .424,  .757,  .486}\textit{\textbf{72.2}} & \cellcolor[rgb]{ .576,  .8,  .494}\textit{\textbf{72.2}} & \cellcolor[rgb]{ .659,  .824,  .498}\textit{\textbf{64.1}} & \cellcolor[rgb]{ .573,  .8,  .494}\textit{\textbf{69.5}} & \cellcolor[rgb]{ .706,  .839,  .502}\textit{\textbf{72.9}} & \cellcolor[rgb]{ .557,  .796,  .494}\textit{\textbf{79.3}} \\
    \textit{\textbf{Snowball}} & \cellcolor[rgb]{ .608,  .808,  .498}\textit{\textbf{67.2}} & \cellcolor[rgb]{ .518,  .784,  .49}\textit{\textbf{73.5}} & \cellcolor[rgb]{ .553,  .792,  .494}\textit{\textbf{78.5}} & \cellcolor[rgb]{ .584,  .804,  .494}\textit{\textbf{80.0}} & \cellcolor[rgb]{ .549,  .792,  .494}\textit{\textbf{81.5}} & \cellcolor[rgb]{ .678,  .831,  .502}\textit{\textbf{81.8}} & \cellcolor[rgb]{ .549,  .792,  .494}\textit{56.4} & \cellcolor[rgb]{ .463,  .769,  .49}\textit{\textbf{65.0}} & \cellcolor[rgb]{ .514,  .784,  .49}\textit{\textbf{69.5}} & \cellcolor[rgb]{ .541,  .792,  .494}\textit{71.1} & \cellcolor[rgb]{ .4,  .749,  .486}\textit{\textbf{72.3}} & \cellcolor[rgb]{ .388,  .745,  .482}\textit{\textbf{72.8}} & \cellcolor[rgb]{ .725,  .843,  .502}\textit{\textbf{62.9}} & \cellcolor[rgb]{ .667,  .827,  .502}\textit{\textbf{68.3}} & \cellcolor[rgb]{ .667,  .827,  .502}\textit{\textbf{73.3}} & \cellcolor[rgb]{ .486,  .776,  .49}\textit{\textbf{79.6}} \\
    \textit{\textbf{truncated Krylov}} & \cellcolor[rgb]{ .388,  .745,  .482}\textit{\textbf{73.0}} & \cellcolor[rgb]{ .388,  .745,  .482}\textit{\textbf{75.5}} & \cellcolor[rgb]{ .388,  .745,  .482}\textit{\textbf{80.3}} & \cellcolor[rgb]{ .388,  .745,  .482}\textit{\textbf{81.5}} & \cellcolor[rgb]{ .388,  .745,  .482}\textit{\textbf{82.5}} & \cellcolor[rgb]{ .388,  .745,  .482}\textit{\textbf{83.4}} & \cellcolor[rgb]{ .388,  .745,  .482}\textit{\textbf{59.6}} & \cellcolor[rgb]{ .388,  .745,  .482}\textit{\textbf{66.0}} & \cellcolor[rgb]{ .388,  .745,  .482}\textit{\textbf{70.2}} & \cellcolor[rgb]{ .388,  .745,  .482}\textit{\textbf{71.8}} & \cellcolor[rgb]{ .388,  .745,  .482}\textit{\textbf{72.4}} & \cellcolor[rgb]{ .565,  .796,  .494}\textit{\textbf{72.2}} & \cellcolor[rgb]{ .388,  .745,  .482}\textit{\textbf{69.1}} & \cellcolor[rgb]{ .388,  .745,  .482}\textit{\textbf{71.8}} & \cellcolor[rgb]{ .388,  .745,  .482}\textit{\textbf{76.1}} & \cellcolor[rgb]{ .388,  .745,  .482}\textit{\textbf{80.0}} \\
    \bottomrule
    \bottomrule
    \multicolumn{17}{m{0.75\textwidth}}{\scriptsize For each (column), the greener the cell, the better the performance. The redder, the worse. If our methods achieve better performance than all others, the corresponding cell will be in bold.}
    \end{tabular}%
\label{tab:results_no_validation}%
\end{table*}%

\begin{table*}[htbp]
\setlength{\tabcolsep}{1.5pt}
  \centering
  \caption{Accuracy With Validation}
  \scriptsize
    \begin{tabular}{c|cccc|ccc|cccc}
    \multirow{3}[1]{*}{Algorithms} & \multicolumn{4}{c|}{Cora}     & \multicolumn{3}{c|}{CiteSeer} & \multicolumn{4}{c}{PubMed} \\
          & \multirow{2}[1]{*}{0.5\%} & \multirow{2}[1]{*}{1\%} & \multirow{2}[1]{*}{3\%} & 5.2\% & \multirow{2}[1]{*}{0.5\%} & \multirow{2}[1]{*}{1\%} & 3.6\% & \multirow{2}[1]{*}{0.03\%} & \multirow{2}[1]{*}{0.05\%} & \multirow{2}[1]{*}{0.1\%} & 0.3\% \\
          &       &       &       & \textit{public} &       &       & \textit{public} &       &       &       & \textit{public} \\
    \midrule
    Cheby & \cellcolor[rgb]{ .973,  .412,  .42}33.9 & \cellcolor[rgb]{ .973,  .412,  .42}44.2 & \cellcolor[rgb]{ .976,  .545,  .443}62.1 & \cellcolor[rgb]{ .988,  .753,  .482}78.0 & \cellcolor[rgb]{ 1,  .922,  .518}45.3 & \cellcolor[rgb]{ 1,  .922,  .518}59.4 & \cellcolor[rgb]{ .835,  .875,  .51}70.1 & \cellcolor[rgb]{ .973,  .412,  .42}45.3 & \cellcolor[rgb]{ .973,  .412,  .42}48.2 & \cellcolor[rgb]{ .973,  .412,  .42}55.2 & \cellcolor[rgb]{ .973,  .412,  .42}69.8 \\
    GCN-FP & \cellcolor[rgb]{ .996,  .906,  .514}50.5 & \cellcolor[rgb]{ .992,  .843,  .502}59.6 & \cellcolor[rgb]{ .992,  .796,  .49}71.7 & \cellcolor[rgb]{ .973,  .42,  .42}74.6 & \cellcolor[rgb]{ .996,  .859,  .502}43.9 & \cellcolor[rgb]{ .988,  .718,  .478}54.3 & \cellcolor[rgb]{ .973,  .412,  .42}61.5 & \cellcolor[rgb]{ .996,  .851,  .502}56.2 & \cellcolor[rgb]{ .996,  .875,  .506}63.2 & \cellcolor[rgb]{ .996,  .847,  .502}70.3 & \cellcolor[rgb]{ .992,  .804,  .494}76.0 \\
    GGNN  & \cellcolor[rgb]{ .992,  .839,  .502}48.2 & \cellcolor[rgb]{ .996,  .871,  .506}60.5 & \cellcolor[rgb]{ .992,  .831,  .498}73.1 & \cellcolor[rgb]{ .988,  .714,  .475}77.6 & \cellcolor[rgb]{ .996,  .875,  .506}44.3 & \cellcolor[rgb]{ .992,  .784,  .49}56.0 & \cellcolor[rgb]{ .984,  .627,  .459}64.6 & \cellcolor[rgb]{ .992,  .835,  .498}55.8 & \cellcolor[rgb]{ .996,  .878,  .51}63.3 & \cellcolor[rgb]{ .996,  .851,  .502}70.4 & \cellcolor[rgb]{ .992,  .792,  .49}75.8 \\
    DCNN  & \cellcolor[rgb]{ .788,  .863,  .506}59.0 & \cellcolor[rgb]{ .835,  .875,  .51}66.4 & \cellcolor[rgb]{ .98,  .918,  .518}76.7 & \cellcolor[rgb]{ 1,  .922,  .518}79.7 & \cellcolor[rgb]{ .741,  .847,  .506}53.1 & \cellcolor[rgb]{ .812,  .867,  .51}62.2 & \cellcolor[rgb]{ .918,  .898,  .514}69.4 & \cellcolor[rgb]{ .863,  .882,  .51}60.9 & \cellcolor[rgb]{ .855,  .882,  .51}66.7 & \cellcolor[rgb]{ .953,  .91,  .518}73.1 & \cellcolor[rgb]{ .996,  .855,  .502}76.8 \\
    MPNN  & \cellcolor[rgb]{ .992,  .788,  .49}46.5 & \cellcolor[rgb]{ .988,  .761,  .486}56.7 & \cellcolor[rgb]{ .992,  .804,  .494}72.0 & \cellcolor[rgb]{ .988,  .753,  .482}78.0 & \cellcolor[rgb]{ .988,  .765,  .486}41.8 & \cellcolor[rgb]{ .988,  .718,  .478}54.3 & \cellcolor[rgb]{ .98,  .588,  .451}64.0 & \cellcolor[rgb]{ .988,  .757,  .486}53.9 & \cellcolor[rgb]{ .988,  .765,  .486}59.6 & \cellcolor[rgb]{ .988,  .761,  .486}67.3 & \cellcolor[rgb]{ .992,  .78,  .49}75.6 \\
    GraphSAGE & \cellcolor[rgb]{ .976,  .518,  .439}37.5 & \cellcolor[rgb]{ .976,  .545,  .443}49.0 & \cellcolor[rgb]{ .98,  .6,  .455}64.2 & \cellcolor[rgb]{ .973,  .412,  .42}74.5 & \cellcolor[rgb]{ .973,  .412,  .42}33.8 & \cellcolor[rgb]{ .98,  .588,  .451}51.0 & \cellcolor[rgb]{ .992,  .812,  .494}67.2 & \cellcolor[rgb]{ .973,  .416,  .42}45.4 & \cellcolor[rgb]{ .98,  .561,  .447}53.0 & \cellcolor[rgb]{ .988,  .706,  .475}65.4 & \cellcolor[rgb]{ .996,  .855,  .502}76.8 \\
    GAT   & \cellcolor[rgb]{ .984,  .635,  .463}41.4 & \cellcolor[rgb]{ .976,  .533,  .443}48.6 & \cellcolor[rgb]{ .973,  .412,  .42}56.8 & \cellcolor[rgb]{ .482,  .773,  .49}83.0 & \cellcolor[rgb]{ .98,  .604,  .455}38.2 & \cellcolor[rgb]{ .973,  .412,  .42}46.5 & \cellcolor[rgb]{ .553,  .796,  .494}72.5 & \cellcolor[rgb]{ .984,  .635,  .463}50.9 & \cellcolor[rgb]{ .973,  .478,  .431}50.4 & \cellcolor[rgb]{ .976,  .537,  .443}59.6 & \cellcolor[rgb]{ .647,  .824,  .498}79.0 \\
    GCN   & \cellcolor[rgb]{ 1,  .922,  .518}50.9 & \cellcolor[rgb]{ 1,  .922,  .518}62.3 & \cellcolor[rgb]{ 1,  .922,  .518}76.5 & \cellcolor[rgb]{ .878,  .886,  .514}80.5 & \cellcolor[rgb]{ .992,  .843,  .502}43.6 & \cellcolor[rgb]{ .988,  .757,  .486}55.3 & \cellcolor[rgb]{ 1,  .922,  .518}68.7 & \cellcolor[rgb]{ 1,  .922,  .518}57.9 & \cellcolor[rgb]{ 1,  .922,  .518}64.6 & \cellcolor[rgb]{ .969,  .914,  .518}73.0 & \cellcolor[rgb]{ 1,  .922,  .518}77.8 \\
    LNet  & \cellcolor[rgb]{ .812,  .867,  .51}58.1 & \cellcolor[rgb]{ .847,  .878,  .51}66.1 & \cellcolor[rgb]{ .918,  .898,  .514}77.3 & \cellcolor[rgb]{ .996,  .898,  .514}79.5 & \cellcolor[rgb]{ .737,  .847,  .506}53.2 & \cellcolor[rgb]{ .871,  .886,  .514}61.3 & \cellcolor[rgb]{ .988,  .741,  .482}66.2 & \cellcolor[rgb]{ .886,  .89,  .514}60.4 & \cellcolor[rgb]{ .706,  .839,  .502}68.8 & \cellcolor[rgb]{ .902,  .894,  .514}73.4 & \cellcolor[rgb]{ .855,  .882,  .51}78.3 \\
    AdaLNet & \cellcolor[rgb]{ .737,  .847,  .506}60.8 & \cellcolor[rgb]{ .792,  .863,  .506}67.5 & \cellcolor[rgb]{ .875,  .886,  .514}77.7 & \cellcolor[rgb]{ .89,  .89,  .514}80.4 & \cellcolor[rgb]{ .718,  .843,  .502}53.8 & \cellcolor[rgb]{ .737,  .847,  .506}63.3 & \cellcolor[rgb]{ 1,  .922,  .518}68.7 & \cellcolor[rgb]{ .859,  .882,  .51}61.0 & \cellcolor[rgb]{ .902,  .894,  .514}66.0 & \cellcolor[rgb]{ 1,  .922,  .518}72.8 & \cellcolor[rgb]{ .914,  .898,  .514}78.1 \\
    \midrule
    \textit{\textbf{linear Snowball}} & \cellcolor[rgb]{ .494,  .776,  .49}\textit{\textbf{70.0}} & \cellcolor[rgb]{ .565,  .796,  .494}\textit{\textbf{73.1}} & \cellcolor[rgb]{ .525,  .784,  .49}\textit{\textbf{81.0}} & \cellcolor[rgb]{ .455,  .765,  .486}\textit{\textbf{83.2}} & \cellcolor[rgb]{ .533,  .788,  .494}\textit{\textbf{59.4}} & \cellcolor[rgb]{ .561,  .796,  .494}\textit{\textbf{65.9}} & \cellcolor[rgb]{ .431,  .761,  .486}\textit{\textbf{73.5}} & \cellcolor[rgb]{ .529,  .788,  .494}\textit{\textbf{68.1}} & \cellcolor[rgb]{ .616,  .812,  .498}\textit{\textbf{70.0}} & \cellcolor[rgb]{ .831,  .875,  .51}\textit{\textbf{73.8}} & \cellcolor[rgb]{ .58,  .804,  .494}\textit{\textbf{79.2}} \\
    \textit{\textbf{Snowball}} & \cellcolor[rgb]{ .416,  .753,  .486}\textit{\textbf{73.0}} & \cellcolor[rgb]{ .416,  .753,  .486}\textit{\textbf{76.8}} & \cellcolor[rgb]{ .553,  .792,  .494}\textit{\textbf{80.7}} & \cellcolor[rgb]{ .388,  .745,  .482}\textit{\textbf{83.6}} & \cellcolor[rgb]{ .443,  .761,  .486}\textit{\textbf{62.1}} & \cellcolor[rgb]{ .671,  .827,  .502}\textit{\textbf{64.2}} & \cellcolor[rgb]{ .541,  .792,  .494}\textit{\textbf{72.6}} & \cellcolor[rgb]{ .404,  .753,  .486}\textit{\textbf{70.8}} & \cellcolor[rgb]{ .388,  .745,  .482}\textit{\textbf{73.2}} & \cellcolor[rgb]{ .388,  .745,  .482}\textit{\textbf{76.5}} & \cellcolor[rgb]{ .49,  .776,  .49}\textit{\textbf{79.5}} \\
    \textit{\textbf{truncated Krylov}} & \cellcolor[rgb]{ .388,  .745,  .482}\textit{\textbf{73.9}} & \cellcolor[rgb]{ .388,  .745,  .482}\textit{\textbf{77.4}} & \cellcolor[rgb]{ .388,  .745,  .482}\textit{\textbf{82.2}} & \cellcolor[rgb]{ .404,  .753,  .486}\textit{\textbf{83.5}} & \cellcolor[rgb]{ .388,  .745,  .482}\textit{\textbf{63.7}} & \cellcolor[rgb]{ .388,  .745,  .482}\textit{\textbf{68.4}} & \cellcolor[rgb]{ .388,  .745,  .482}\textit{\textbf{73.9}} & \cellcolor[rgb]{ .388,  .745,  .482}\textit{\textbf{71.1}} & \cellcolor[rgb]{ .416,  .753,  .486}\textit{\textbf{72.9}} & \cellcolor[rgb]{ .529,  .788,  .494}\textit{\textbf{75.7}} & \cellcolor[rgb]{ .388,  .745,  .482}\textit{\textbf{79.9}} \\
    \bottomrule
    \bottomrule
    \end{tabular}%
\label{tab:results_with_validation}%
\end{table*}%


\clearpage
\bibliographystyle{abbrv}
\bibliography{references}

\begin{thebibliography}{10}

\bibitem{atwood2015diffusion}
J.~Atwood and D.~Towsley.
\newblock Diffusion-convolutional neural networks.
\newblock {\em arXiv}, abs/1511.02136, 2015.

\bibitem{bronstein2016geometric}
M.~M. Bronstein, J.~Bruna, Y.~LeCun, A.~Szlam, and P.~Vandergheynst.
\newblock Geometric deep learning: going beyond euclidean data.
\newblock {\em arXiv}, abs/1611.08097, 2016.

\bibitem{chen2018fastgcn}
J.~Chen, T.~Ma, and C.~Xiao.
\newblock Fastgcn: fast learning with graph convolutional networks via
  importance sampling.
\newblock {\em arXiv preprint arXiv:1801.10247}, 2018.

\bibitem{chen2017stochastic}
J.~Chen, J.~Zhu, and L.~Song.
\newblock Stochastic training of graph convolutional networks with variance
  reduction.
\newblock {\em arXiv preprint arXiv:1710.10568}, 2017.

\bibitem{coifman2006diffusionmaps}
R.~R. Coifman and S.~Lafon.
\newblock Diffusion maps.
\newblock {\em Applied and computational harmonic analysis}, 21(1):5--30, 2006.

\bibitem{coifman2006diffusion}
R.~R. Coifman and M.~Maggioni.
\newblock Diffusion wavelets.
\newblock {\em Applied and Computational Harmonic Analysis}, 21(1):53--94,
  2006.

\bibitem{defferrard2016fast}
M.~Defferrard, X.~Bresson, and P.~Vandergheynst.
\newblock Convolutional neural networks on graphs with fast localized spectral
  filtering.
\newblock {\em arXiv}, abs/1606.09375, 2016.

\bibitem{duvenaud2015convolutional}
D.~K. Duvenaud, D.~Maclaurin, J.~Iparraguirre, R.~Bombarell, T.~Hirzel,
  A.~Aspuru-Guzik, and R.~P. Adams.
\newblock Convolutional networks on graphs for learning molecular fingerprints.
\newblock In {\em Advances in neural information processing systems}, pages
  2224--2232, 2015.

\bibitem{frommer2017radau}
A.~Frommer, K.~Lund, M.~Schweitzer, and D.~B. Szyld.
\newblock The radau--lanczos method for matrix functions.
\newblock {\em SIAM Journal on Matrix Analysis and Applications},
  38(3):710--732, 2017.

\bibitem{frommer2017block}
A.~Frommer, K.~Lund, and D.~B. Szyld.
\newblock Block {Krylov} subspace methods for functions of matrices.
\newblock {\em Electronic Transactions on Numerical Analysis}, 47:100--126,
  2017.

\bibitem{gilmer2017neural}
J.~Gilmer, S.~S. Schoenholz, P.~F. Riley, O.~Vinyals, and G.~E. Dahl.
\newblock Neural message passing for quantum chemistry.
\newblock In {\em Proceedings of the 34th International Conference on Machine
  Learning-Volume 70}, pages 1263--1272. JMLR. org, 2017.

\bibitem{gutknecht2009block}
M.~H. Gutknecht and T.~Schmelzer.
\newblock The block grade of a block krylov space.
\newblock {\em Linear Algebra and its Applications}, 430(1):174--185, 2009.

\bibitem{hamilton2017inductive}
W.~L. Hamilton, R.~Ying, and J.~Leskovec.
\newblock Inductive representation learning on large graphs.
\newblock {\em arXiv}, abs/1706.02216, 2017.

\bibitem{hammond2011wavelets}
D.~K. Hammond, P.~Vandergheynst, and R.~Gribonval.
\newblock Wavelets on graphs via spectral graph theory.
\newblock {\em Applied and Computational Harmonic Analysis}, 30(2):129--150,
  2011.

\bibitem{hinton2006fast}
G.~E. Hinton, S.~Osindero, and Y.-W. Teh.
\newblock A fast learning algorithm for deep belief nets.
\newblock {\em Neural computation}, 18(7):1527--1554, 2006.

\bibitem{huang2017densely}
G.~Huang, Z.~Liu, L.~Van Der~Maaten, and K.~Q. Weinberger.
\newblock Densely connected convolutional networks.
\newblock In {\em Proceedings of the IEEE conference on computer vision and
  pattern recognition}, pages 4700--4708, 2017.

\bibitem{kipf2016classification}
T.~N. Kipf and M.~Welling.
\newblock Semi-supervised classification with graph convolutional networks.
\newblock {\em arXiv}, abs/1609.02907, 2016.

\bibitem{lecun2015deep}
Y.~LeCun, Y.~Bengio, and G.~Hinton.
\newblock Deep learning.
\newblock {\em nature}, 521(7553):436, 2015.

\bibitem{lecun1998gradient}
Y.~LeCun, L.~Bottou, Y.~Bengio, P.~Haffner, et~al.
\newblock Gradient-based learning applied to document recognition.
\newblock {\em Proceedings of the IEEE}, 86(11):2278--2324, 1998.

\bibitem{li2018deeper}
Q.~Li, Z.~Han, and X.~Wu.
\newblock Deeper insights into graph convolutional networks for semi-supervised
  learning.
\newblock {\em arXiv}, abs/1801.07606, 2018.

\bibitem{li2018adaptive}
R.~Li, S.~Wang, F.~Zhu, and J.~Huang.
\newblock Adaptive graph convolutional neural networks.
\newblock In {\em Thirty-Second AAAI Conference on Artificial Intelligence},
  2018.

\bibitem{li2015gated}
Y.~Li, D.~Tarlow, M.~Brockschmidt, and R.~Zemel.
\newblock Gated graph sequence neural networks.
\newblock {\em arXiv preprint arXiv:1511.05493}, 2015.

\bibitem{liao2018graph}
R.~Liao, M.~Brockschmidt, D.~Tarlow, A.~L. Gaunt, R.~Urtasun, and R.~Zemel.
\newblock Graph partition neural networks for semi-supervised classification.
\newblock {\em arXiv preprint arXiv:1803.06272}, 2018.

\bibitem{liao2019lanczos}
R.~Liao, Z.~Zhao, R.~Urtasun, and R.~S. Zemel.
\newblock Lanczosnet: Multi-scale deep graph convolutional networks.
\newblock {\em arXiv}, abs/1901.01484, 2019.

\bibitem{maaten2008visualizing}
L.~v.~d. Maaten and G.~Hinton.
\newblock Visualizing data using t-sne.
\newblock {\em Journal of machine learning research}, 9(Nov):2579--2605, 2008.

\bibitem{monti2017geometric}
F.~Monti, D.~Boscaini, J.~Masci, E.~Rodola, J.~Svoboda, and M.~M. Bronstein.
\newblock Geometric deep learning on graphs and manifolds using mixture model
  cnns.
\newblock In {\em Proceedings of the IEEE Conference on Computer Vision and
  Pattern Recognition}, pages 5115--5124, 2017.

\bibitem{musco2018stability}
C.~Musco, C.~Musco, and A.~Sidford.
\newblock Stability of the lanczos method for matrix function approximation.
\newblock In {\em Proceedings of the Twenty-Ninth Annual ACM-SIAM Symposium on
  Discrete Algorithms}, pages 1605--1624. Society for Industrial and Applied
  Mathematics, 2018.

\bibitem{shuman2012emerging}
D.~I. Shuman, S.~K. Narang, P.~Frossard, A.~Ortega, and P.~Vandergheynst.
\newblock The emerging field of signal processing on graphs: Extending
  high-dimensional data analysis to networks and other irregular domains.
\newblock {\em arXiv preprint arXiv:1211.0053}, 2012.

\bibitem{sun2019virtual}
K.~Sun, H.~Guo, Z.~Zhu, and Z.~Lin.
\newblock Virtual adversarial training on graph convolutional networks in node
  classification.
\newblock {\em arXiv preprint arXiv:1902.11045}, 2019.

\bibitem{sun2019stage}
K.~Sun, Z.~Zhu, and Z.~Lin.
\newblock Multi-stage self-supervised learning for graph convolutional
  networks.
\newblock {\em arXiv}, abs/1902.11038, 2019.

\bibitem{tran2013sparse}
L.~V. Tran, V.~H. Vu, and K.~Wang.
\newblock Sparse random graphs: Eigenvalues and eigenvectors.
\newblock {\em Random Structures \& Algorithms}, 42(1):110--134, 2013.

\bibitem{velivckovic2017attention}
P.~Veli{\v{c}}kovi{\'c}, G.~Cucurull, A.~Casanova, A.~Romero, P.~Lio, and
  Y.~Bengio.
\newblock Graph attention networks.
\newblock {\em arXiv}, abs/1710.10903, 2017.

\bibitem{wang2015improved}
S.~Wang, Z.~Zhang, and T.~Zhang.
\newblock Improved analyses of the randomized power method and block lanczos
  method.
\newblock {\em arXiv preprint arXiv:1508.06429}, 2015.

\bibitem{wu2012learning}
X.-M. Wu, Z.~Li, A.~M. So, J.~Wright, and S.-F. Chang.
\newblock Learning with partially absorbing random walks.
\newblock In {\em Advances in Neural Information Processing Systems}, pages
  3077--3085, 2012.

\bibitem{wu2019survey}
Z.~Wu, S.~Pan, F.~Chen, G.~Long, C.~Zhang, and P.~S. Yu.
\newblock A comprehensive survey on graph neural networks.
\newblock {\em arXiv}, abs/1901.00596, 2019.

\bibitem{yang2016revisiting}
Z.~Yang, W.~W. Cohen, and R.~Salakhutdinov.
\newblock Revisiting semi-supervised learning with graph embeddings.
\newblock {\em arXiv preprint arXiv:1603.08861}, 2016.

\bibitem{zhang2018graph}
S.~Zhang, H.~Tong, J.~Xu, and R.~Maciejewski.
\newblock Graph convolutional networks: Algorithms, applications and open
  challenges.
\newblock In {\em International Conference on Computational Social Networks},
  pages 79--91. Springer, 2018.

\end{thebibliography}

\newpage

\section*{Appendices}
\label{appendix}
\subsection*{Appendix \rom{1}: Proof of Theorem 1, 2} \label{appendix:1}
We extend Theorem 1 in \cite{li2018deeper} to a general diffusion operator $L$ in the following lemma.
\begin{lemma} 1
Suppose that a graph $\mathcal{G}$ has $k$ connected components $\{C_i\}_{i=1}^k$ and the diffusion operator $L$ is defined as that in \eqref{eq0}. $L$ has $k$ linearly independent eigenvectors $\{\bm{v_1},\dots, \bm{v_k}\}$ corresponding to its largest eigenvalue $\lambda_{max}$. If $\mathcal{G}$ has no bipartite components, then for any $\bm{x} \in \mathbb{R}^N$
\begin{equation}\label{eq2}
\underset{m \rightarrow \infty }{\text{lim}}\; (\frac{1}{\lambda_{max}} L )^m \bm{x} = [\bm{v_1},\dots, \bm{v_k}] \bm{\theta},
\end{equation}
for some $\bm{\theta} \in \mathbb{R}^k$.
\end{lemma}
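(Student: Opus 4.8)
The plan is to turn the limit into a spectral-projection computation driven by the connected-component structure of $L$ together with Perron--Frobenius theory. First I would relabel the nodes so that $L$ becomes block diagonal, $L=\bigoplus_{i=1}^k L^{(i)}$, with one block per connected component $C_i$; each block is a symmetric, entrywise-nonnegative matrix that is irreducible because $C_i$ is connected. Applying Perron--Frobenius to each $L^{(i)}$ shows its spectral radius is a simple eigenvalue with a strictly positive eigenvector; extending this eigenvector by zeros off $C_i$ gives $\bm{v_i}$. Since $L$ is a symmetrically normalized adjacency with self-loops, all these per-block spectral radii equal the common value $\lambda_{max}=\rho(L)$, and the vectors $\{\bm{v_1},\dots,\bm{v_k}\}$, having pairwise disjoint supports, are linearly independent and span the $\lambda_{max}$-eigenspace. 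This already yields the eigenvector claim in the statement.

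Next, because $L$ is symmetric it has an orthonormal eigenbasis $\{\bm{u_j}\}$ with real eigenvalues $\mu_j$, and $\lambda_{max}=\rho(L)$ gives $|\mu_j|\le\lambda_{max}$ for all $j$. The only obstruction to convergence of the normalized powers is an eigenvalue equal to $-\lambda_{max}$, which would contribute an oscillating $(-1)^m$ term. Here the hypothesis enters: for a symmetric nonnegative irreducible block, $-\rho$ is an eigenvalue exactly when the corresponding component is bipartite, so the assumption that $\mathcal{G}$ has no bipartite component forces $-\lambda_{max}\notin\sigma(L)$. Consequently every eigenvalue other than $\lambda_{max}$ satisfies the strict bound $|\mu_j|<\lambda_{max}$. (For the operator in \eqref{eq0} the self-loops in $\tilde{A}=A+I$ already make each component non-bipartite, so the hypothesis is met automatically.)

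Finally I would write $(\tfrac{1}{\lambda_{max}}L)^m=\sum_j (\mu_j/\lambda_{max})^m\,\bm{u_j}\bm{u_j}^T$ and let $m\to\infty$: each coefficient with $\mu_j=\lambda_{max}$ stays $1$, while the strict bound sends every other coefficient to $0$. The limit operator is therefore the orthogonal projector $P$ onto the $\lambda_{max}$-eigenspace, which equals $\spn\{\bm{v_1},\dots,\bm{v_k}\}$; hence $\lim_{m\to\infty}(\tfrac{1}{\lambda_{max}}L)^m\bm{x}=P\bm{x}=[\bm{v_1},\dots,\bm{v_k}]\bm{\theta}$ with $\bm{\theta}$ the coordinate vector of $P\bm{x}$. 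The main obstacle is the middle step: establishing that the absence of bipartite components is exactly what removes a $-\lambda_{max}$ eigenvalue and thereby upgrades the Perron bound $|\mu_j|\le\lambda_{max}$ to the strict inequality needed for genuine convergence rather than mere boundedness; the block reduction and the final projection argument are then routine.
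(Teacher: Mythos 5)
Your proposal is correct, and it is worth noting that the paper itself supplies no proof of Lemma 1 at all: it simply states that the lemma ``extends Theorem 1 in [Li et al., 2018]'' and moves on. Your argument is essentially the standard spectral one underlying that cited theorem: block-diagonalize $L$ over the connected components, apply Perron--Frobenius to each irreducible nonnegative block to get a simple top eigenvalue with a positive eigenvector, observe that the symmetric normalization forces every block's top eigenvalue to equal the common $\lambda_{max}$ (each block's Perron vector is $D^{1/2}\bm{1}$ restricted to that component, with eigenvalue $1$), rule out $-\lambda_{max}$ via non-bipartiteness, and pass to the limit through the spectral projector. You correctly flag the two points that actually need care and that a careless version of this argument would skip: (i) that the per-block spectral radii coincide, without which only the maximizing components would contribute eigenvectors to the limit, and (ii) that the absence of bipartite components (automatic here because $\tilde{A}=A+I$ has positive diagonal) is exactly what upgrades $\abs{\mu_j}\leq\lambda_{max}$ to the strict inequality needed for convergence rather than oscillation. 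The only cosmetic gap is that you should fix the convention for $D$ (in the Kipf--Welling renormalization it is the degree matrix of $\tilde{A}$, not of $A$, which is what makes the top eigenvalue exactly $1$); with that pinned down the proof is complete and, if anything, more self-contained than the paper's citation-only treatment.
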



\begin{lemma} 2
Suppose we  randomly sample $\bm{x}, \bm{y}\in \mathbb{R}^N$ under a continuous distribution. Suppose we have point-wise function $\text{ReLU}(z) = \text{max}(0,z)$, we have
$$\doubleP(\text{rank}\left(\text{ReLU}([\bm{x}, \bm{y}])\right) \leq \text{rank}([\bm{x}, \bm{y}]) \;|\; \bm{x}, \bm{y} \in \mathbb{R}^N) = 1$$
\end{lemma}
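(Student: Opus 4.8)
The plan is to reduce the claim to controlling when the rank can strictly \emph{increase}, and then to confine any such increase to a measure-zero set. Since $\bm{x}, \bm{y} \in \mathbb{R}^N$, the matrix $[\bm{x},\bm{y}]$ is $N \times 2$, so $\text{rank}([\bm{x},\bm{y}]) \in \{0,1,2\}$; the point-wise map $\text{ReLU}$ returns another $N \times 2$ matrix, whose rank is therefore also at most $2$. Hence the inequality $\text{rank}(\text{ReLU}([\bm{x},\bm{y}])) \le \text{rank}([\bm{x},\bm{y}])$ can only be violated when $\text{rank}([\bm{x},\bm{y}]) \le 1$, that is, when $\bm{x}$ and $\bm{y}$ are linearly dependent (including the degenerate case where one of them is $0$).

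First I would isolate the exceptional set $E = \{(\bm{x},\bm{y}) \in \mathbb{R}^N \times \mathbb{R}^N : \text{rank}([\bm{x},\bm{y}]) \le 1\}$. This is precisely the common zero locus of the $2\times 2$ minors $x_i y_j - x_j y_i$, a finite family of polynomial equations, so $E$ is a determinantal (algebraic) variety. For $N \ge 2$ it is a proper subvariety of $\mathbb{R}^{2N}$ and hence has Lebesgue measure zero; the edge case $N = 1$ needs no argument, since every $1\times 2$ matrix and its $\text{ReLU}$ image both have rank at most $1$.

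Next I would use the hypothesis that $(\bm{x},\bm{y})$ follows a continuous distribution, which is absolutely continuous with respect to Lebesgue measure and therefore assigns probability zero to any null set; in particular $\doubleP((\bm{x},\bm{y}) \in E) = 0$. Thus with probability one $\text{rank}([\bm{x},\bm{y}]) = 2$, while $\text{rank}(\text{ReLU}([\bm{x},\bm{y}])) \le 2$ holds for every realization, and the two facts together give the stated inequality almost surely.

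The point I would be most careful to articulate is why the conclusion must be phrased almost surely rather than as a pointwise bound: on $E$ the rank genuinely can increase, e.g. $[\bm{x},-\bm{x}]$ with $\bm{x}$ of mixed sign has rank $1$, yet $\text{ReLU}(\bm{x})$ and $\text{ReLU}(-\bm{x})$ have disjoint supports and span a $2$-dimensional space. The entire content of the lemma is that this rank-inflating behaviour is trapped inside the dependency locus $E$, so I expect the only real obstacle to be stating cleanly that $E$ is Lebesgue-null and that a continuous distribution charges it with probability zero; there is no heavy computation involved.
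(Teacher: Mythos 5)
Your proof is correct, and its core coincides with the paper's: almost surely $\bm{x}$ and $\bm{y}$ are linearly independent, so $\mathrm{rank}([\bm{x},\bm{y}])=2$, and since $\mathrm{ReLU}([\bm{x},\bm{y}])$ is still an $N\times 2$ matrix its rank cannot exceed $2$; the inequality can therefore only fail on the dependency locus, which is null. Where you genuinely differ is in execution. The paper runs a case analysis on $\mathrm{rank}([\bm{x},\bm{y}])\in\{1,2\}$ and, within each case, tries to quantify exactly when ReLU preserves or breaks (in)dependence --- invoking a separate Lemma 3 to compute a probability $\frac{1}{2^N}\frac{1+N}{2^N}$ of the rank dropping in the independent case, and asserting a conditional probability of $\tfrac12$ that dependency is preserved in the dependent case. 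None of that is needed for the one-sided inequality being claimed: rank drops are harmless, and the dependent case is conditioned on a probability-zero event. Your version strips this away and instead justifies the measure-zero claim cleanly, identifying the dependency set $E$ as the common zero locus of the $2\times 2$ minors, a proper algebraic subvariety of $\mathbb{R}^{2N}$ for $N\ge 2$, hence Lebesgue-null and hence of probability zero under an absolutely continuous law; you also dispose of the $N=1$ edge case, which the paper ignores. The only caveat worth stating explicitly is that ``continuous distribution'' must be read as ``absolutely continuous with respect to Lebesgue measure on $\mathbb{R}^{2N}$'' (a merely atomless law could still charge $E$, e.g.\ one supported on $\{(\bm{x},2\bm{x})\}$); the paper relies on the same implicit reading, so this is a shared hypothesis rather than a gap in your argument.
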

\begin{proof}
We generalize ReLU onto multi-dimensional case by applying it element-wise on every element of the matrix. Any $\bm{x} \in \mathbb{R}^N$ can be represented as $\bm{x} = \bm{x_+} + \bm{x_-}$, where $\bm{x_+}$ and $\bm{x_-}$ are the nonnegative and nonpositive components of $\bm{x}$, respectively. We can see that $\text{ReLU}(\bm{x}) = \bm{x}_+$. It is trivial when $\bm{y}=0$. We only discuss for all nonzero $\bm{y} \in \mathbb{R}^N$.

Suppose $\bm{x}$ and $\bm{y}$ are linearly independent (with probability $1$), then $\nexists\; c \neq 0$ that $\bm{x} = c\bm{y}$. If $\exists\; d \neq 0$ that ReLU$(\bm{x})$= $d$ ReLU$(\bm{y})$, then $\bm{x_+} = d \bm{y_+} $ and $\bm{x_-} \neq d \bm{y_-}$ and the existence of this kind of $\bm{x},\bm{y}$ has a nonzero probability $\frac{1}{2^N} \frac{1+N}{2^N}$ (See Lemma 3); other than these cases, the independency will be kept after the ReLU transformation. 

Suppose $y$ is linearly dependent of $\bm{x}$ (with probability 0), \ie{}, $\exists\; c \neq 0$ that $\bm{x} = c \bm{y}$. If $c>0$, we have $\bm{x_-} = c \bm{y_-}$ and $ \bm{x_+} = c \bm{y_+}$. Since ReLU$(\bm{x})$=$\bm{x_+}$, ReLU$(\bm{y})$=$\bm{y_+}$, then ReLU$(\bm{x})$= $c$ReLU$(\bm{y})$. If $c<0$, we have $\bm{x_-} = c \bm{y_+}$ and $ \bm{x_+} = c \bm{y_-}$. If $\exists\; d \neq 0$ that ReLU$(\bm{x})$= $d$ ReLU$(\bm{y})$, this means $\bm{x_+} = d \bm{y_+} = \frac{d}{c} \bm{x_-}$. This $d$ exists when $\bm{x_+} $ and $\bm{x_-}$ are linearly dependent, which only holds when $x=0$. This happens with probability 0. Then, ReLU$(\bm{x})$ and $d$ ReLU$(\bm{y})$ will be independent. So whether ReLU keep the dependency between to vectors or not depends on the sign of $c$. Thus, under the assumption, we have probability $\frac{1}{2}$ that ReLU keeps the dependency.

According to the discussion above, we have
\begin{align*}
&\doubleP\left(\text{rank}\left(\text{ReLU}([\bm{x}, \bm{y}])\right) \leq \text{rank}([\bm{x}, \bm{y}]) \;|\;\bm{x}, \bm{y} \in \mathbb{R}^N\right) = \frac{\doubleP\left(\text{rank}\left(\text{ReLU}([\bm{x}, \bm{y}])\right) \leq \text{rank}([\bm{x}, \bm{y}]) ,\bm{x}, \bm{y} \in \mathbb{R}^N\right)}{\doubleP\left(\bm{x}, \bm{y} \in \mathbb{R}^N\right)}\\
& = \doubleP\left(\text{rank}\left(\text{ReLU}([\bm{x}, \bm{y}])\right) \leq \text{rank}([\bm{x}, \bm{y}]) \;|\; \text{rank}([\bm{x}, \bm{y}]) = 1,\bm{x}, \bm{y} \in \mathbb{R}^N \right) \doubleP\left(\text{rank}([\bm{x}, \bm{y}]) = 1,\bm{x}, \bm{y} \in \mathbb{R}^N \right) +\\
&\doubleP\left(\text{rank}\left(\text{ReLU}([\bm{x}, \bm{y}])\right) \leq \text{rank}([\bm{x}, \bm{y}]) \;|\; \text{rank}([\bm{x}, \bm{y}]) = 2,\bm{x}, \bm{y} \in \mathbb{R}^N \right) \doubleP\left(\text{rank}([\bm{x}, \bm{y}]) = 2,\bm{x}, \bm{y} \in \mathbb{R}^N \right)\\
& = 0 \times \frac{1}{2} + 1 \times 1 = 1.
\end{align*}
Lemma proved.
\end{proof}
\begin{lemma} 3
 Suppose we randomly sample $\bm{x}, \bm{y} \in \mathbb{R}^N$ under a continuous distribution, then
 $$\doubleP(\bm{x_+} = d \bm{y_+}, \bm{x_-} \neq d \bm{y_-} \;|\; x,y \in \mathbb{R}^N, d \neq 0) = \frac{1}{2^N} \frac{1+N}{2^N}$$
\end{lemma}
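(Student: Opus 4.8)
The plan is to reduce the event to a statement about the sign patterns of $\bm{x}$ and $\bm{y}$ and then count the surviving configurations. First I would note that, since the sampling distribution is continuous (and sign-symmetric), with probability $1$ no coordinate of $\bm{x}$ or $\bm{y}$ vanishes, so each of $\bm{x},\bm{y}$ has a well-defined sign pattern in $\{+,-\}^N$; these two patterns are independent and uniform, hence any fixed pair of patterns occurs with probability $2^{-N}\cdot 2^{-N}=4^{-N}$. Writing $S_{\bm{x}}=\{i:x_i>0\}$ and $S_{\bm{y}}=\{i:y_i>0\}$, I would analyze $\bm{x_+}=d\bm{y_+}$ coordinatewise: for $d>0$ it forces $S_{\bm{x}}=S_{\bm{y}}$ equal to a common set $S$ together with $x_i=d\,y_i$ for every $i\in S$, while for $d<0$ it forces $\bm{x_+}=\bm{y_+}=0$, i.e. $S_{\bm{x}}=S_{\bm{y}}=\varnothing$. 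In either case the two sign patterns must coincide, so the event lives on the diagonal.

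Next I would stratify by $p=|S|$, the number of positive coordinates of the common pattern. The crucial dichotomy is threefold: (i) if $p\ge 2$, the requirement $x_i=d\,y_i$ on at least two coordinates pins the ratio $x_i/y_i$ to a single value across $p\ge 2$ independent pairs, a measure-zero event that contributes $0$; (ii) if $p=1$, the unique positive coordinate $i_0$ determines $d=x_{i_0}/y_{i_0}>0$, and then the clause $\bm{x_-}\neq d\bm{y_-}$ asks that $x_j=d\,y_j$ fail on at least one of the remaining negative coordinates, which holds with probability $1$ as soon as $N\ge 2$; (iii) if $p=0$, both vectors are entirely negative, so $\bm{x_+}=d\bm{y_+}=0$ for every $d$ and some $d\neq0$ gives $\bm{x_-}\neq d\bm{y_-}$ with probability $1$. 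Hence only the configurations with $p\in\{0,1\}$ survive, each contributing its configuration probability $4^{-N}$.

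Finally I would count the surviving configurations: exactly one all-negative pattern ($p=0$) and exactly $N$ single-positive patterns ($p=1$), giving $1+N$ diagonal configurations and total probability $(1+N)\,4^{-N}=\tfrac{1}{2^N}\cdot\tfrac{1+N}{2^N}$, as claimed. The main obstacle I anticipate is the bookkeeping in the reduction step: one must check that a single $d$ governs both clauses (so $\bm{x_+}=d\bm{y_+}$ and $\bm{x_-}\neq d\bm{y_-}$ are read off at the same $d$), avoid double-counting between the $d>0$ and $d<0$ cases (which both collapse to the $p=0$ configuration), and cleanly separate the zero-probability proportionality events in (i) from the probability-one events in (ii)--(iii). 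The measure-zero and probability-one arguments rely on the continuity of the distribution together with the independence of the coordinate(s) that fix $d$ from those being tested, which I would make precise by conditioning on the coordinate(s) determining $d$.
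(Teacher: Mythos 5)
Your proposal is correct and follows essentially the same route as the paper: you condition on the sign pattern (hyperoctant) of $\bm{x}$ having at most one positive coordinate ($1+N$ of the $2^N$ patterns) and require $\bm{y}$ to share that pattern (probability $2^{-N}$), discarding the strata with two or more positive coordinates as measure zero. Your write-up is somewhat more explicit than the paper's about the $d>0$ versus $d<0$ dichotomy and the almost-sure failure of $\bm{x_-} = d\bm{y_-}$ on the remaining coordinates, but the decomposition and the final count of $(1+N)/4^N$ are the same.
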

\begin{proof}
\begin{align*}
    &\doubleP(\bm{x_+} = d \bm{y_+}, \bm{x_-} \neq d \bm{y_-}) \\
    & = P (\bm{x_+} = d \bm{y_+}, \bm{x_-} \neq d \bm{y_-} \;|\;  df(\bm{x_+}) \leq 1 ) \; P(df(\bm{x_+}) \leq 1) + P (\bm{x_+} = d \bm{y_+}, \bm{x_-} \neq d \bm{y_-} \;|\;  df(\bm{x_+}) > 1 ) \; P (df(\bm{x_+}) > 1)\\
    &=\frac{1}{2^N} \frac{1+N}{2^N} + 0 \cdot \frac{2^N -1 -N}{2^N} =\frac{1}{2^N} \frac{1+N}{2^N}
\end{align*}
where $df$ denotes degree of freedom. $df(\bm{x_+}) \leq 1$ means that $\bm{x}$ can at most have one dimension to be positive and there are $1+N$ out of $2^N$ hyperoctants that satisfies this condition. The set of $\bm{y}$ that can make $\bm{x_+} = d \bm{y_+}, \bm{x_-} \neq d \bm{y_-}$ hold has an area of $\frac{1}{2^N}$, \ie{} when $\bm{y}$ is in the same hyperoctant as $\bm{x}$. If $\bm{x}$ lies in other hyperoctants, $df(y_-) \leq N-2$. And since $\bm{x_+} = d \bm{y_+}$, $\bm{y}$ is just a low dimensional surface in $\mathbb{R}^N$ with area 0.
\end{proof}
\begin{theorem} 1
Suppose that $\mathcal{G}$ has $k$ connected components and the diffusion operator $L$ is defined as that in \eqref{eq0}. Let $\bm{X} \mathbb{R}^{N \times F}$ be any block vector that randomly sampled under a continuous distribution and $\{W_0, W_1, \dots, W_n\}$ be any set of parameter matrices, if $\mathcal{G}$ has no bipartite components, then in \eqref{eq1}, as $n \to \infty$, $\text{rank}(\bm{Y'}) \leq k$ almost surely.
\end{theorem}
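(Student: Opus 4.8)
The plan is to exploit the spectral structure of $L$ together with the geometry of its top eigenspace. Let $V = \mathrm{span}\{\bm{v}_1,\dots,\bm{v}_k\} \subseteq \mathbb{R}^N$ be the eigenspace of the largest eigenvalue $\lambda_{\max}$ of $L$, which by Lemma 1 has dimension exactly $k$, one eigenvector per connected component. For the normalized operator of \eqref{eq0} one has $\lambda_{\max}=1$, and, crucially, each $\bm{v}_i$ can be taken \emph{entrywise nonnegative with support confined to the $i$-th component} (the Perron vector of that component), so the $\bm{v}_i$ have pairwise disjoint supports. Because $\mathcal{G}$ has no bipartite component, $-\lambda_{\max}$ is \emph{not} an eigenvalue, hence $\lambda_{\max}$ is the unique eigenvalue of maximal modulus and every other eigenvalue satisfies $|\lambda| < \lambda_{\max}$. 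This spectral gap at the top is what makes Lemma 1 applicable and drives the convergence below.

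First I would record the key invariance: the set $\mathcal{V} := \{\bm{M} : \text{every column of } \bm{M}\text{ lies in } V\}$ is preserved by each of the three operations composing a layer of \eqref{eq1}. Left multiplication by $L$ preserves it because $V$ is an eigenspace, $L\bm{v}_i = \lambda_{\max}\bm{v}_i \in V$; right multiplication by any $W_i$ preserves it because columns of $\bm{M}W_i$ are linear combinations of columns of $\bm{M}$, hence still in $V$. The essential and least obvious point is that \textbf{ReLU preserves $V$}: writing $\bm{u}=\sum_i c_i \bm{v}_i \in V$, on component $C_i$ we have $\bm{u}|_{C_i}=c_i\,\bm{v}_i|_{C_i}$ with $\bm{v}_i|_{C_i}\geq 0$, so $\mathrm{ReLU}(\bm{u})=\sum_i \max(c_i,0)\,\bm{v}_i \in V$; this is precisely where nonnegativity and disjoint supports of the Perron vectors are indispensable. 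Consequently, once the propagated features have all columns in $V$ they remain in $\mathcal{V}$ through every subsequent layer, giving $\mathrm{rank}(\bm{Y'}) \le \dim V = k$. Lemma 2 supplies the complementary fact that ReLU never \emph{increases} rank almost surely, so the rank cannot climb back above $k$.

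The remaining task is to show the features actually \emph{converge into} $\mathcal{V}$ as $n\to\infty$. Here I would decompose each column along $V \oplus V^{\perp}$ and track the $V^{\perp}$ part. By Lemma 1 and the top spectral gap, $L^m \to P_V$ (the orthogonal projection onto $V$), with off-$V$ modes contracted by the subdominant modulus $\mu<\lambda_{\max}$; thus each bare application of $L$ shrinks the $V^{\perp}$ component while leaving the $V$ component fixed. The continuous-distribution hypothesis on $\bm{X}$ is then used for genericity: almost surely $P_V\bm{X}$ attains the maximal rank $\min(k,F)$ and no intermediate feature lands on the measure-zero configurations excluded in Lemmas 2--3, so the limiting rank is governed by $\dim V$ rather than by an accidental degeneracy.

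The main obstacle is the interaction of the pointwise $\mathrm{ReLU}$ with the repeated diffusion: unlike $L$, ReLU does not act linearly on the $V \oplus V^{\perp}$ decomposition, so it can \emph{reinject} mass into $V^{\perp}$ at every layer, and the arbitrary matrices $W_i$ may amplify that mass before the next $L$ contracts it. The crux of a rigorous argument is therefore to show that the $V^{\perp}$ component still vanishes in the limit, e.g. by bounding, layer by layer, the reinjected $V^{\perp}$ mass against the surviving one (using that ReLU is $1$-Lipschitz) and arguing that the geometric contraction by $\mu<\lambda_{\max}$ dominates, so that after infinitely many layers only the ReLU-invariant, $L$-fixed part in $\mathcal{V}$ survives. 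I expect this contraction-versus-reinjection bookkeeping, together with the genericity needed to pass from ``columns converge into $V$'' to ``$\mathrm{rank}\le k$ almost surely,'' to be the delicate part; the spectral and invariance facts above are the comparatively easy ingredients.
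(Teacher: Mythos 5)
Your proposal takes a genuinely different route from the paper, and it contains a gap that you yourself flag but that I do not think your suggested bookkeeping can close. The paper's proof never shows that the features converge into the top eigenspace $V$. It only uses rank monotonicity: by Lemmas 2--3, $\text{rank}(\text{ReLU}(L\bm{X})) \leq \text{rank}(L\bm{X})$ almost surely, and trivially $\text{rank}(L\bm{X}W_0)\leq \text{rank}(L\bm{X})$; iterating these through the layers gives $\text{rank}(\bm{Y'}) \leq \text{rank}(L^{n+1}\bm{X})$, and then Lemma 1 alone forces $\lim_{n\to\infty}\text{rank}(L^{n+1}\bm{X}) = \text{rank}([\bm{v_1},\dots,\bm{v_k}][\bm{\theta_1},\dots,\bm{\theta_F}]) \leq k$. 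The whole point of that route is that one never has to control \emph{where} the features live, only an upper bound on their rank, so the interaction between ReLU and the spectral decomposition is sidestepped entirely.

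Your route instead requires the features to actually converge into $\mathcal{V}$, and the contraction-versus-reinjection argument you sketch does not go through. Two independent reasons: first, the $W_i$ are arbitrary parameter matrices with no norm bound, so whatever $V^{\perp}$ mass ReLU reinjects can be amplified without limit before the single factor of $\mu<\lambda_{\max}$ is applied. Second, and more fundamentally, ReLU can destroy the $V$-component while preserving the $V^{\perp}$-component: if a column is $\bm{u}=\bm{v}+\bm{w}$ with $\bm{v}=\sum_i c_i\bm{v_i}$, all $c_i<0$, and $\|\bm{w}\|$ tiny, then $\text{ReLU}(\bm{v})=0$ and $\text{ReLU}(\bm{u})$ has $V$- and $V^{\perp}$-components of comparable size $O(\|\bm{w}\|)$, so the ratio of $V^{\perp}$ to $V$ mass jumps to $\Theta(1)$ no matter how small it was before. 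A single subsequent contraction by $\mu$ cannot restore it, so there is no layerwise Lyapunov-type decay and the features need not converge into $\mathcal{V}$ at all. (This is consistent with the theorem, which asserts only a rank bound, not subspace convergence.) Your observations that the $\bm{v_i}$ are nonnegative Perron vectors with disjoint supports and that $\mathcal{V}$ is ReLU-invariant are correct and genuinely nice, but they prove invariance of a set the trajectory is never shown to enter. To repair the argument you would essentially have to fall back on the paper's mechanism: show each layer operation is almost surely rank-nonincreasing and let $L^{n+1}\bm{X}\to[\bm{v_1},\dots,\bm{v_k}]\bm{\Theta}$ do the work.
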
	

\begin{proof}
Upon the conclusions in Lemma 2-3, we have rank$\left(\text{ReLU} (L \textbf{X})\right) \leq$rank($L \textbf{X}$) with probability 1 and it is obvious rank$(L \bm{X} W_0) \leq$rank($L \bm{X}$). Using these two inequality iteratively for \eqref{eq1}, we have rank($\bm{Y'}$) $\leq$ rank($L^{n+1}  \bm{X}$). Based on Lemma 1, we have probability 1 to get
\begin{equation*} \label{eq3}
\underset{n \rightarrow \infty }{\text{lim}} \; \text{rank}(\bm{Y'}) \leq \underset{n \rightarrow \infty }{\text{lim}}\; \text{rank}(L^{n+1} \bm{X}) = \text{rank} ([\bm{v_1},\dots, \bm{v_k}] [\bm{\theta_1},\dots, \bm{\theta_F}]) \leq \text{rank}([\bm{v_1},\dots, \bm{v_k}]) = k,
\end{equation*}
where $\bm{\theta_i} \in \mathbb{R}^k, i=1,\dots,F$. Thus, rank($\bm{Y'}$) $\leq k$
\end{proof}


\begin{theorem} 2
Suppose we randomly sample $\bm{x}, \bm{y} \in \mathbb{R}^N$ under a continuous distribution and we have the point-wise function $\text{Tanh}(z) = \frac{e^z - e^{-z}}{e^z + e^{-z}}$, we have
$$\doubleP(\text{rank}\left(\text{Tanh}([\bm{x},\bm{y}])\right) \geq \text{rank}([\bm{x},\bm{y}]) \;|\; \bm{x},\bm{y} \in \mathbb{R}^N) = 1$$
\end{theorem}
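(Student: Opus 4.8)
The plan is to reduce the claim to a null-set argument that exploits the fact that $\mathrm{Tanh}$ is a smooth bijection. First I would note that $[\bm{x},\bm{y}]$ is an $N\times 2$ matrix, so its rank is at most $2$; since $\bm{x},\bm{y}$ are drawn from a continuous distribution they are linearly independent almost surely, so $\mathrm{rank}([\bm{x},\bm{y}])=2$ with probability $1$. Consequently the complementary (bad) event $\{\mathrm{rank}(\mathrm{Tanh}([\bm{x},\bm{y}]))<\mathrm{rank}([\bm{x},\bm{y}])\}$ is contained, up to a null event, in the event that the two columns $\mathrm{Tanh}(\bm{x})$ and $\mathrm{Tanh}(\bm{y})$ are linearly dependent (because a rank drop from $2$ forces rank $\le 1$, i.e. dependence of the columns). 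It therefore suffices to show that this dependence event has probability zero.

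Next I would exploit that $z\mapsto\mathrm{Tanh}(z)$ is a $C^\infty$ bijection from $\mathbb{R}$ onto $(-1,1)$ with derivative $1-\mathrm{Tanh}^2(z)>0$ everywhere, hence a diffeomorphism. Applying it coordinate-wise yields a diffeomorphism $\Phi:\mathbb{R}^{2N}\to(-1,1)^{2N}$ given by $\Phi(\bm{x},\bm{y})=(\mathrm{Tanh}(\bm{x}),\mathrm{Tanh}(\bm{y}))$. Let $D\subset\mathbb{R}^N\times\mathbb{R}^N$ denote the set of linearly dependent pairs. Since $D$ is covered by the image of the map $(\bm{v},c)\mapsto(c\bm{v},\bm{v})$ together with the two slices where a column vanishes, it is a finite union of sets of dimension at most $N+1<2N$ (for $N\ge 2$), and hence $D$ has Lebesgue measure zero in $\mathbb{R}^{2N}$.

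The dependence event after $\mathrm{Tanh}$ is exactly $\Phi^{-1}(D)=\Phi^{-1}(D\cap(-1,1)^{2N})$. Because $\Phi^{-1}$ (applying $\mathrm{artanh}$ coordinate-wise) is itself a diffeomorphism, and diffeomorphisms map Lebesgue-null sets to Lebesgue-null sets, the set $\Phi^{-1}(D)$ is Lebesgue-null. A continuous distribution is absolutely continuous with respect to Lebesgue measure, so it assigns probability zero to this null set; combined with the first paragraph this yields $\doubleP(\mathrm{rank}(\mathrm{Tanh}([\bm{x},\bm{y}]))\ge\mathrm{rank}([\bm{x},\bm{y}]))=1$.

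I expect the main obstacle to be measure-theoretic bookkeeping rather than any deep idea: I must argue cleanly that $D$ is Lebesgue-null (the codimension/parametrization count above) and that coordinate-wise $\mathrm{Tanh}$ is a genuine diffeomorphism whose inverse preserves null sets. A secondary point to handle carefully is the reading of ``continuous distribution'' as absolutely continuous, so that Lebesgue-null implies probability-null, together with the degenerate case $N=1$, where every pair is dependent and the statement is trivially vacuous. It is worth emphasizing that, unlike Lemma~2 for $\mathrm{ReLU}$ — where non-injectivity on the negative axis forces the explicit hyperoctant computation of Lemma~3 — the injectivity of $\mathrm{Tanh}$ is precisely what makes the diffeomorphism route available and lets me bypass any such case analysis.
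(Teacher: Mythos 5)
Your proposal is correct, and it takes a genuinely different route from the paper's. The paper first reduces to $N=2$ and then argues case-by-case on whether $\bm{x},\bm{y}$ are linearly dependent or independent, in each case writing down the functional equations $\text{Tanh}(x_i)/\text{Tanh}(y_i)=\text{Tanh}(x_j)/\text{Tanh}(y_j)$ that a rank drop would force, and asserting that their solution set has measure zero (the dependent case is even dispatched by exhibiting a purported ``unique solution'' with explicit rational coordinates, a claim that is not literally correct --- e.g.\ $\bm{x}=\bm{y}$ also solves the system --- but that case has probability zero anyway, so the theorem is unharmed). You instead observe that the only event that matters is the a.s.\ event $\text{rank}([\bm{x},\bm{y}])=2$, reduce the bad event to linear dependence of $\text{Tanh}(\bm{x})$ and $\text{Tanh}(\bm{y})$, and transfer the Lebesgue-null set $D$ of dependent pairs back through the coordinate-wise diffeomorphism $\Phi$; since diffeomorphisms (being locally Lipschitz) preserve null sets and the law of $(\bm{x},\bm{y})$ is absolutely continuous, the bad event is null. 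What your approach buys is rigor and generality: it avoids the paper's equation-counting (and its dubious uniqueness claim), needs nothing about $\text{Tanh}$ beyond its being a smooth injection with nonvanishing derivative, and so applies verbatim to any coordinate-wise diffeomorphism and to block vectors with more than two columns. What the paper's approach is \emph{attempting} to buy is a slightly stronger informal message --- that Tanh actively ``breaks'' dependence even on the measure-zero set where $\bm{x},\bm{y}$ are dependent --- but that claim is not needed for the stated theorem and is not established cleanly there. Two small points you should make explicit in a final write-up: the reading of ``continuous distribution'' as absolutely continuous with respect to Lebesgue measure on $\mathbb{R}^{2N}$ (otherwise null sets need not have probability zero), and the fact that for $N=1$ the inequality holds trivially since $\text{Tanh}$ fixes zero and maps nonzero scalars to nonzero scalars, so the rank (at most $1$) is preserved exactly.
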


\begin{proof}
We first prove it for $N=2$. It is trivial when $x,y$ have 0 elements.

Suppose $\bm{x} = [x_1, x_2] ,\bm{y} = [y_1, y_2]$ are linearly dependent and all of their elements are nonzero. If $\text{Tanh}(\bm{x})$ and $\text{Tanh}(\bm{y})$ are still linearly dependent, we must have
$$ \frac{\text{Tanh}(x_1)}{\text{Tanh}(y_1)} = \frac{\text{Tanh}(x_2)}{\text{Tanh}(y_2)}, \text{ \st{} } \frac{x_1}{y_1} = \frac{x_2}{y_2} $$
This equation only have one solution
$$x_1 = \frac{1033977}{9530},\; x_2 = -\frac{929}{10} ,\; y_1 = -\frac{1113}{10}, \; y_2 = \frac{953}{10}$$
which means point-wise $\text{Tanh}$ transformation will break the dependency of $\bm{x},\bm{y}$ with probability 1. 

If $x,y$ are linearly independent, suppose $\text{Tanh}(\bm{x}) = \bm{x'} = [x_1', x_2']$ is a vector in $\mathbb{R}^2$, and the set of vectors in $\mathbb{R}^2$ that can be transformed by point-wise Tanh to the same line as $\bm{x'}$ covers an area of probability 0. That is for any fixed $\bm{x'} \in \mathbb{R}^2$, the solution of
$$\frac{\text{Tanh}(y_1)}{\text{Tanh}(y_2)} = \frac{x_1'}{x_2'}$$
covers an area of 0. Thus,
$$ \doubleP(\text{rank}\left(\text{Tanh}([\bm{x},\bm{y}])\right) \geq \text{rank}([\bm{x},\bm{y}]) \;|\; \bm{x},\bm{y} \in \mathbb{R}^N) = 1 $$
which means point-wise Tanh transformation will increase the independency between vectors in $\mathbb{R}^2$.

Similar to $\mathbb{R}^2$, suppose $\bm{x} = [x_1, x_2, \dots, x_N] ,\bm{y} = [y_1, y_2, \dots, y_N]$ are linearly dependent, if all elements in $\bm{x}, \bm{y}$ are nonzero and $\text{Tanh}(\bm{x})$ and $\text{Tanh}(\bm{y})$ are still linearly dependent, we must have
$$ \frac{\text{Tanh}(x_1)}{\text{Tanh}(y_1)} = \frac{\text{Tanh}(x_2)}{\text{Tanh}(y_2)} = \cdots = \frac{\text{Tanh}(x_N)}{\text{Tanh}(y_N)}, \text{ \st{} } \frac{x_1}{y_1} = \frac{x_2}{y_2} \cdots = \frac{x_N}{y_N} $$
The solution of any pair of equations covers an area of probability 0 in $\mathbb{R}^N$. Actually, this still holds when $\bm{x},\bm{y}$ have some 0 elements, \ie{} for any subset of the above equations, the area of the solution is still 0.

If $\bm{x},\bm{y}$ are linearly independent, suppose $\text{Tanh}(\bm{x}) = \bm{x'}$ is a vector in $\mathbb{R}^N$, and the space in $\mathbb{R}^N$ that can be transformed by point-wise $\text{Tanh}$ to the same line as $\bm{x'}$ covers an area of probability 0. Therefore, Lemma 2 still holds in $\mathbb{R}^N$.
\end{proof}

\subsection*{Appendix \rom{2}: Numerical Experiments on Synthetic Data}
\label{appendix:2}
The goal of the experiments is to test which network structure with which kind of activation function has the potential to be extended to deep architecture. We measure this potential by the numerical rank of the output features in each hidden layer of the networks using synthetic data. The reason of choosing this measure can be explained by Theorem \ref{thm1}. We build the certain networks with depth 100 and the data is generated as follows.

We first randomly generate edges of an Erd\H{o}s-R\'enyi graph $G(1000,0.01)$, \ie{} the existence of the edge between any pair of nodes is a Bernoulli random variable with $p = 0.01$.  Then, we construct the corresponding adjacency matrix $A$ of the graph which is a $\mathbb{R}^{1000 \times 1000} $ matrix. We generate a $\mathbb{R}^{1000 \times 500}$ feature matrix $\bm{X}$ and each of its element is drawn from $N(0,1)$. We normalize $A$ and $\bm{X}$ as \cite{kipf2016classification} and abuse the notation $A,\bm{X}$ to denote the normalized matrices. We keep 3 blocks in each layer of truncated block Krylov network. The number of input channel in each layer depends on the network structures and the number of output channel is set to be 128 for all networks. Each element in every parameter matrix $W_i, \; i = 1, \dots, 100$ is randomly sampled from $N(0,1)$ and the size is $\mathbb{R}^{\# input \times \#output}$. With the synthetic $A,\bm{X},W_i$, we simulate the feedforward process according to the network architecture and collect the numerical rank (at most 128) of the output in each of the 100 hidden layers. For each activation function under each network architecture, we repeat the experiments for 20 times and plot the mean results with standard deviation bars.
\subsection*{Appendix \rom{3}: Rank Comparison of Activation Functions and Networks}
\label{appendix:3}
\begin{figure*}[htbp]
\centering
\subfloat[GCN]{
\captionsetup{justification = centering}
\includegraphics[width=0.32\textwidth]{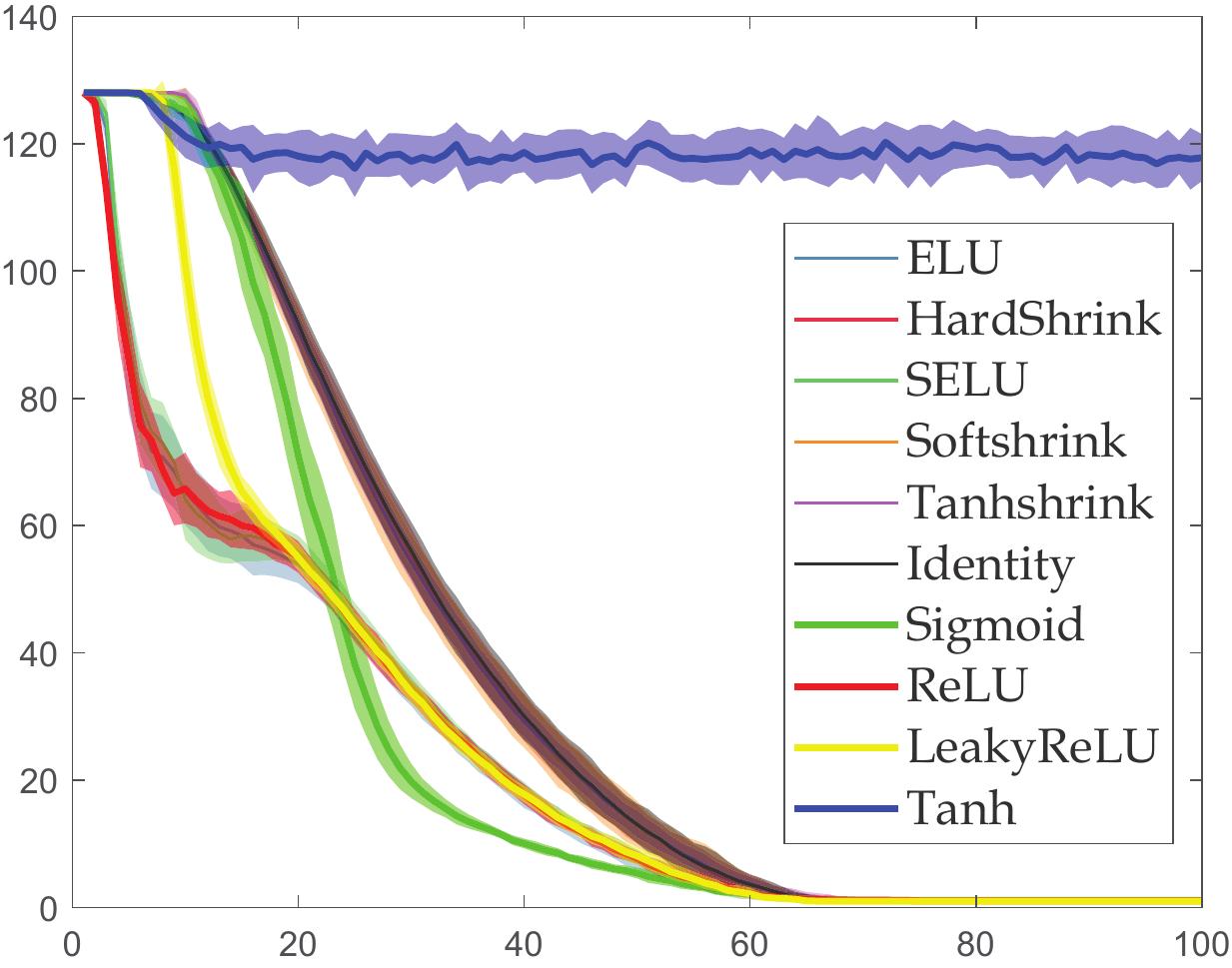}}
\hfill
\subfloat[Snowball]{
\captionsetup{justification = centering}
\includegraphics[width=0.32\textwidth]{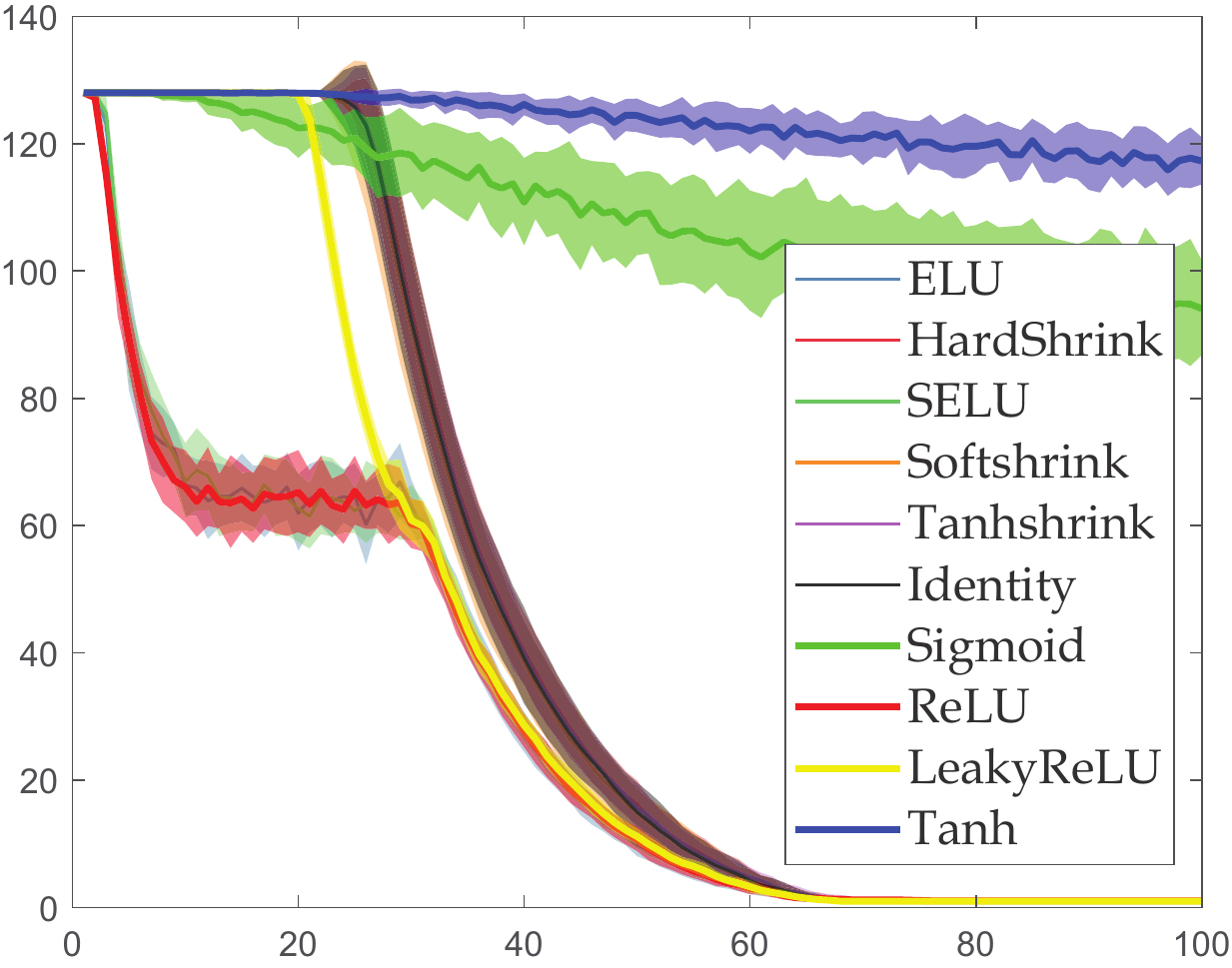}}
\hfill
\subfloat[Truncated Block Krylov]{
\captionsetup{justification = centering}
\includegraphics[width=0.32\textwidth]{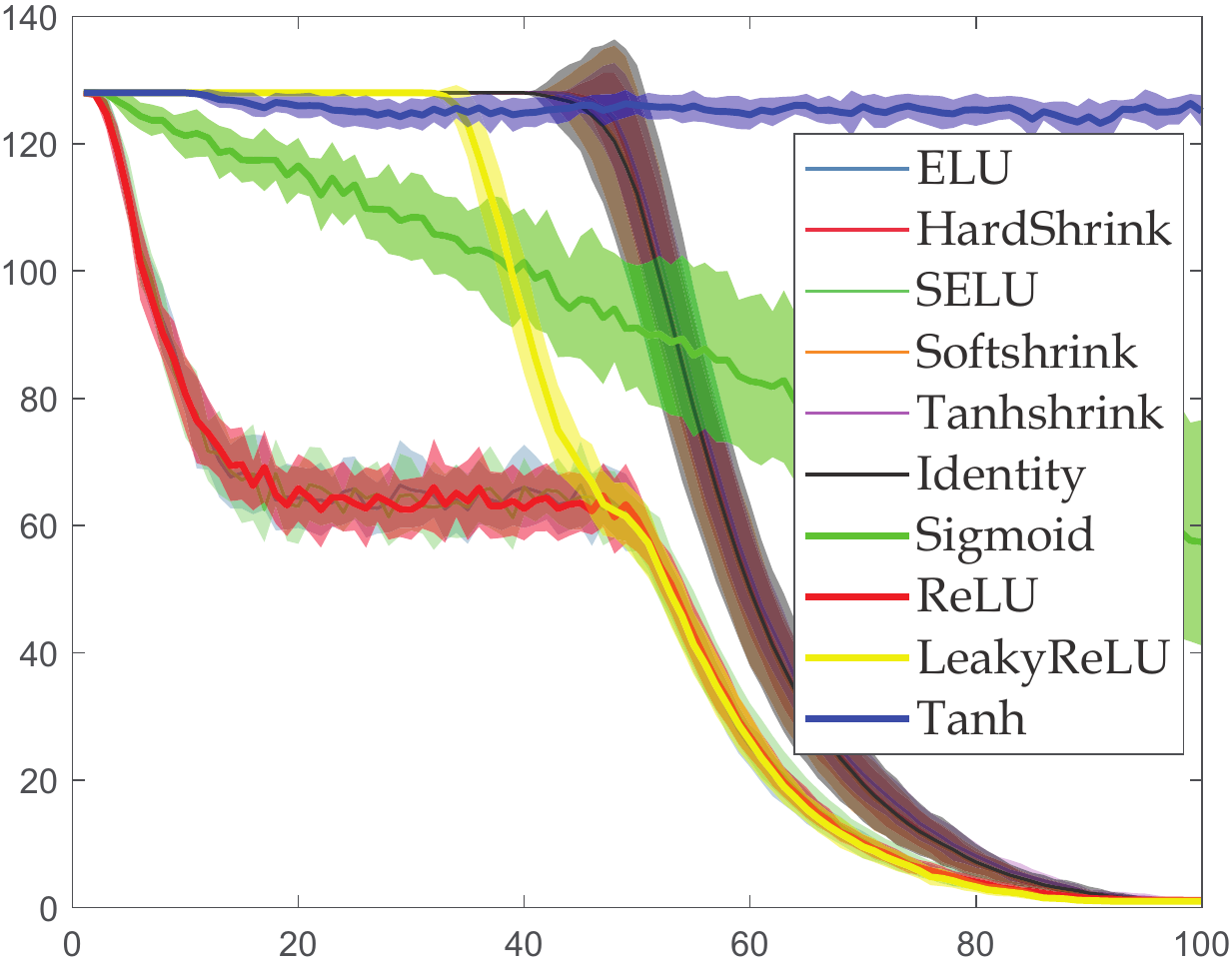}}
\caption{Column ranks of different activation functions with the same architecture}
\end{figure*}

\begin{figure*}[htbp]
\centering
\subfloat[ReLU]{
\captionsetup{justification = centering}
\includegraphics[width=0.32\textwidth]{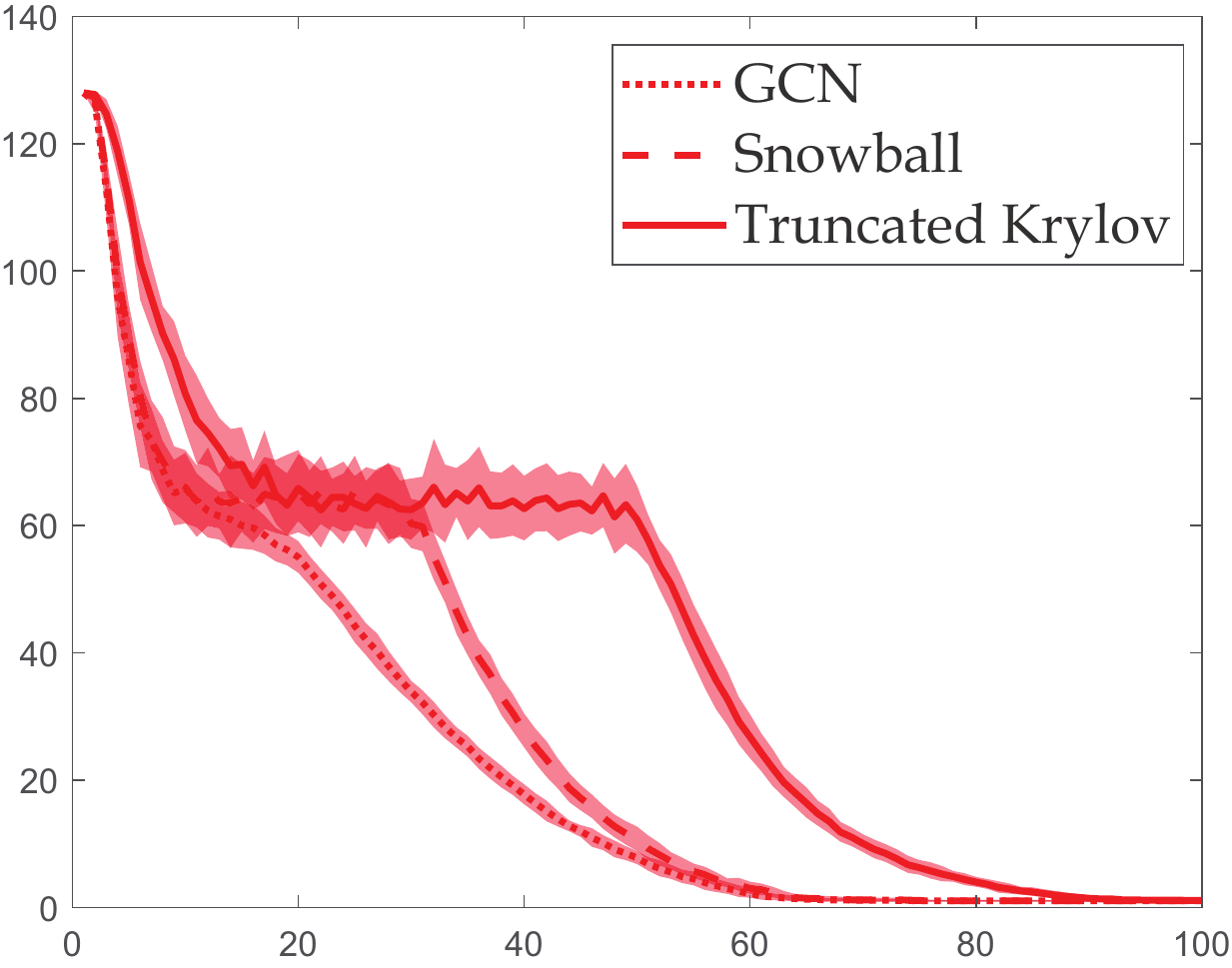}}
\hfill
\subfloat[Identity]{
\captionsetup{justification = centering}
\includegraphics[width=0.32\textwidth]{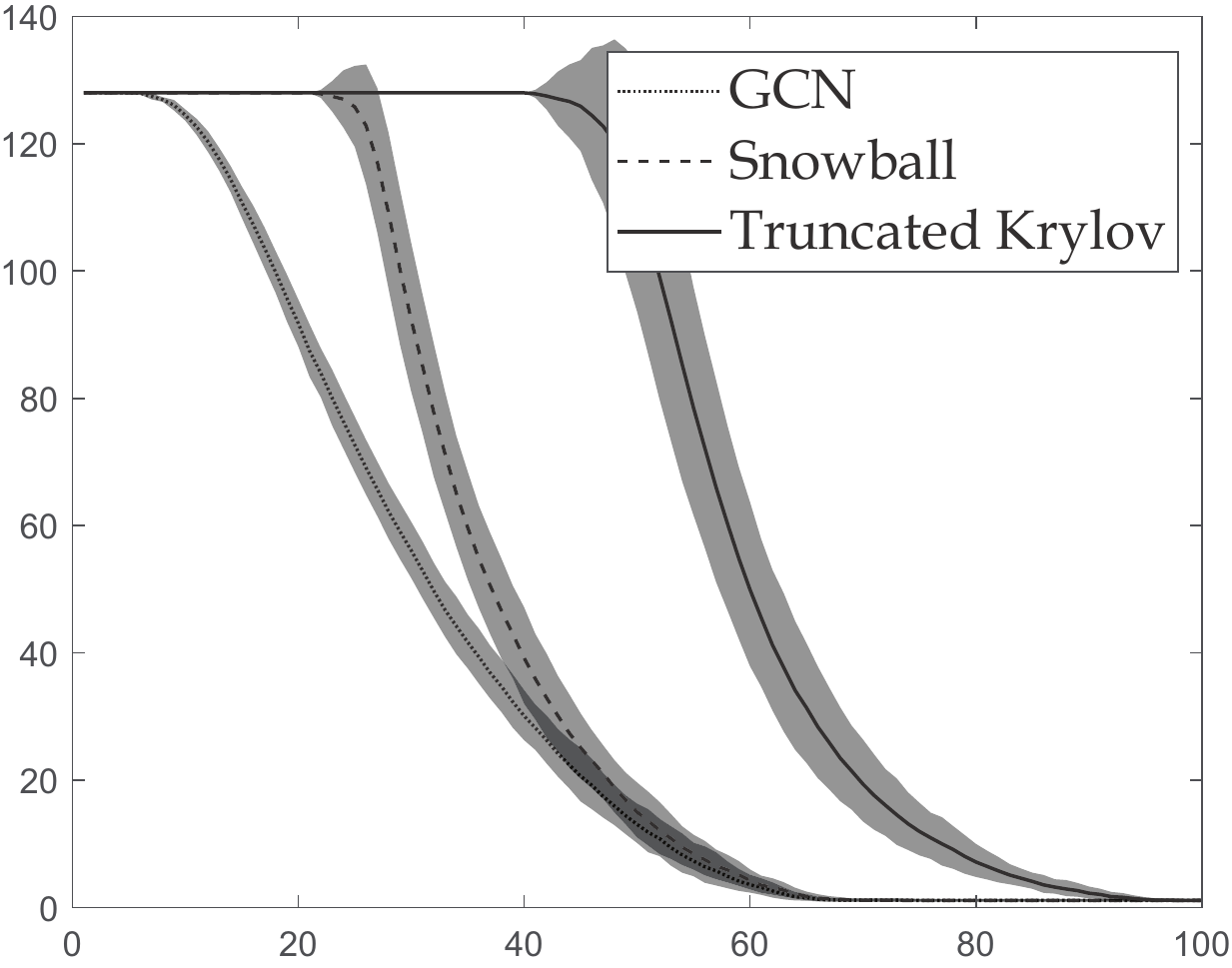}}
\hfill
\subfloat[Tanh]{
\captionsetup{justification = centering}
\includegraphics[width=0.32\textwidth]{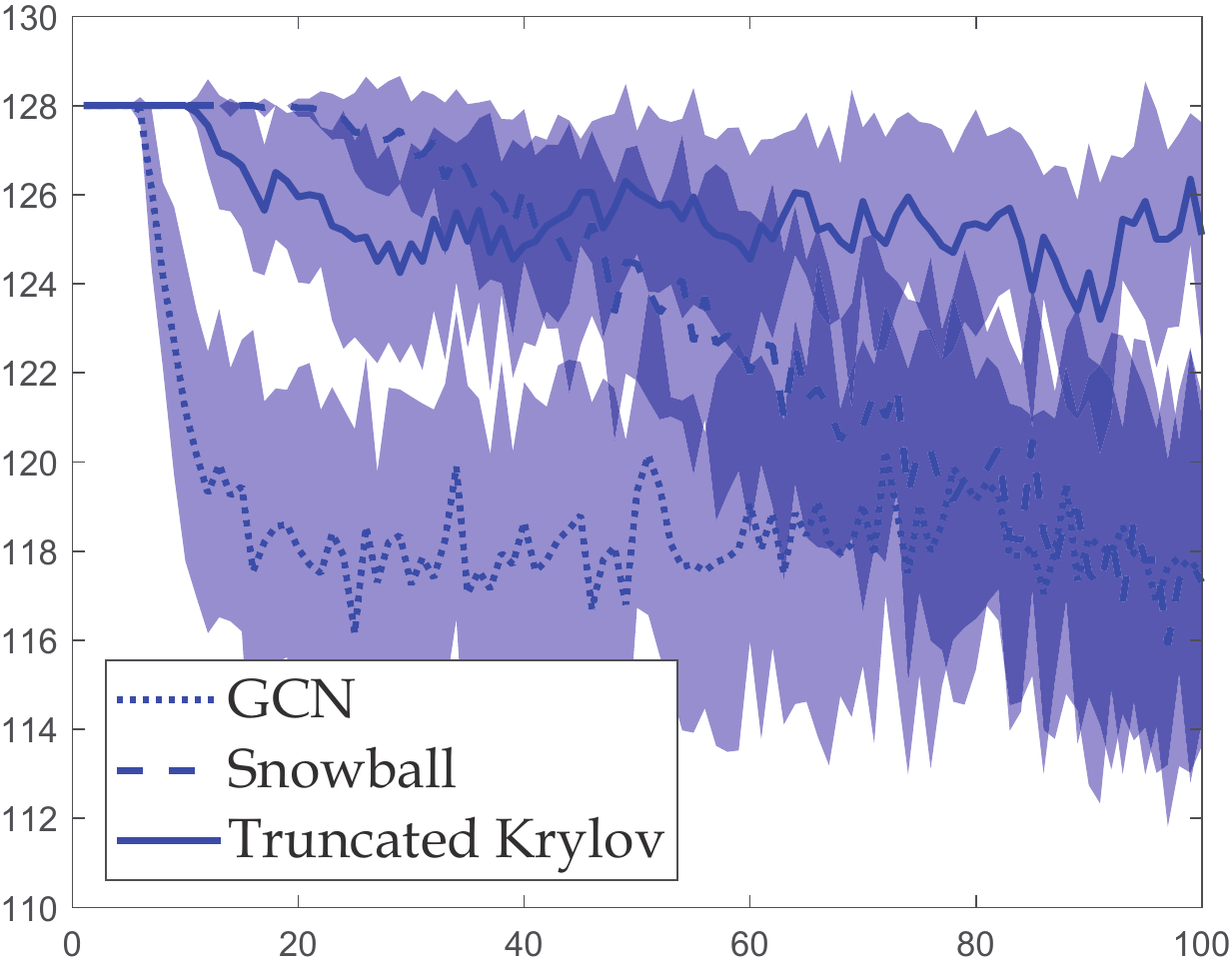}}
\caption{Column ranks of different architectures with the same activation function}
\end{figure*}

\subsection*{Appendix \rom{4}: Spectrum of the Datasets}
\label{appendix:4}
\label{spectrum}
\begin{figure*}[bhtp]
\centering
\subfloat[Cora]{
\captionsetup{justification = centering}
\includegraphics[width=0.32\textwidth]{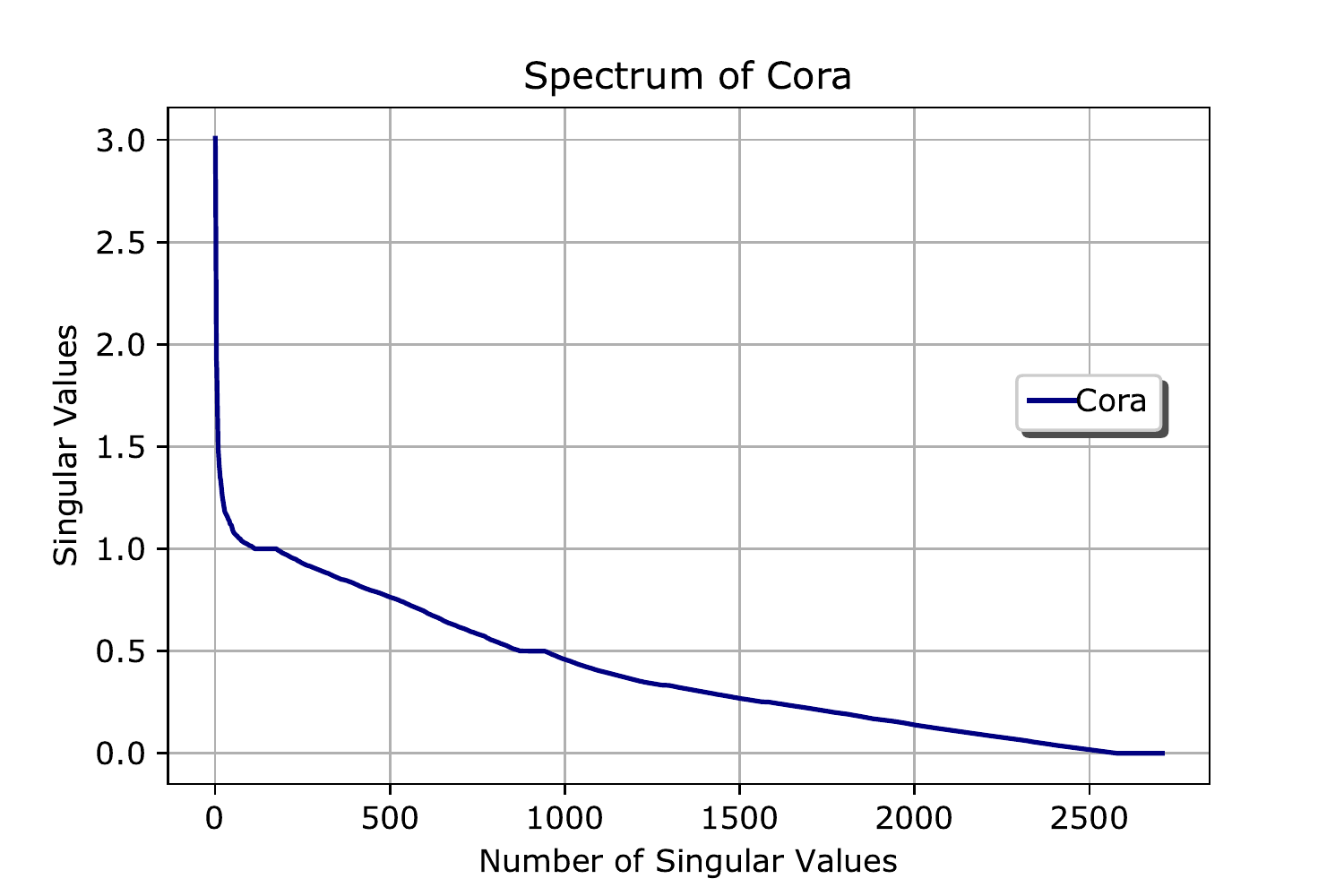}}
\hfill
\subfloat[Citeseer]{
\captionsetup{justification = centering}
\includegraphics[width=0.32\textwidth]{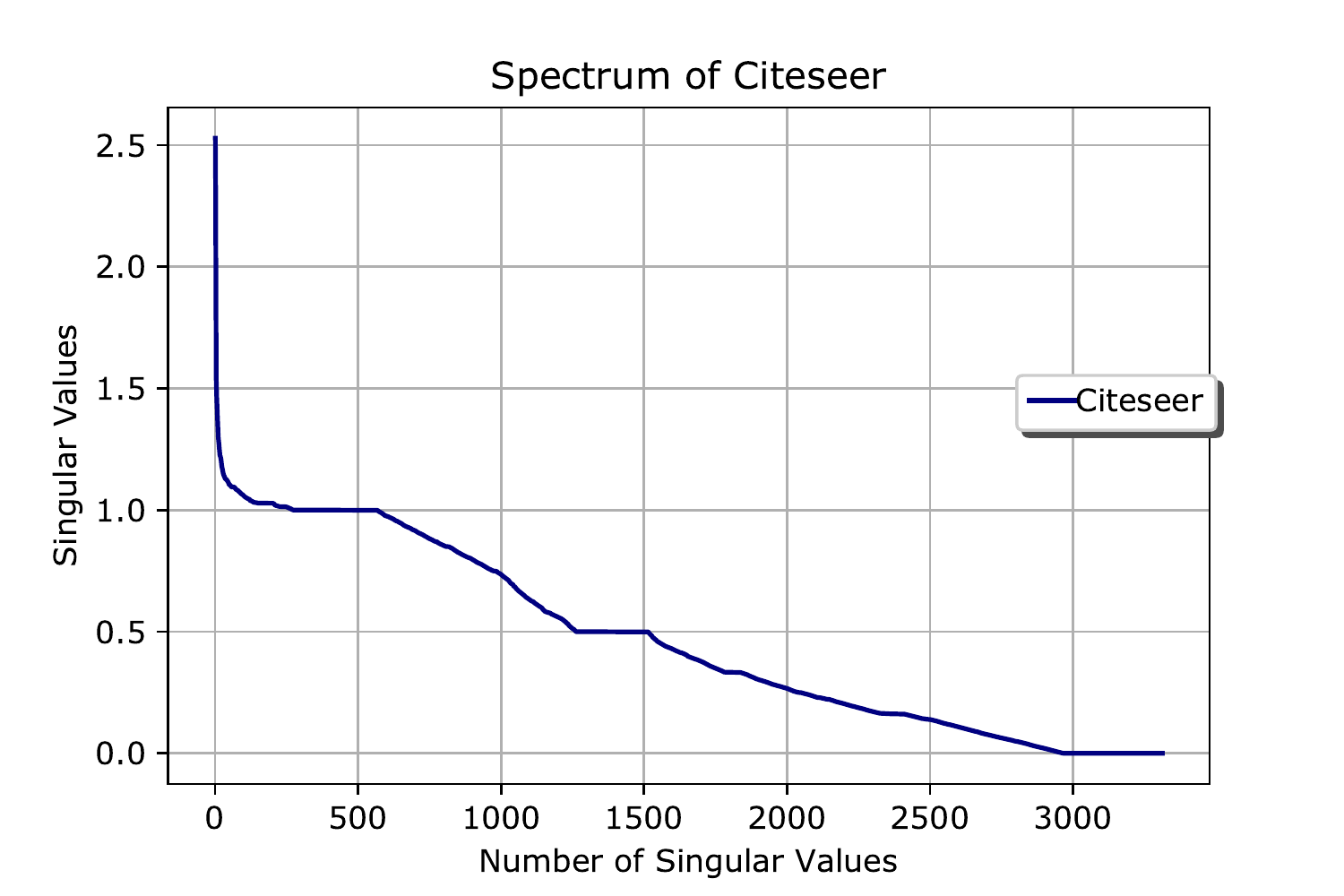}}
\hfill
\subfloat[PubMed]{
\captionsetup{justification = centering}
\includegraphics[width=0.32\textwidth]{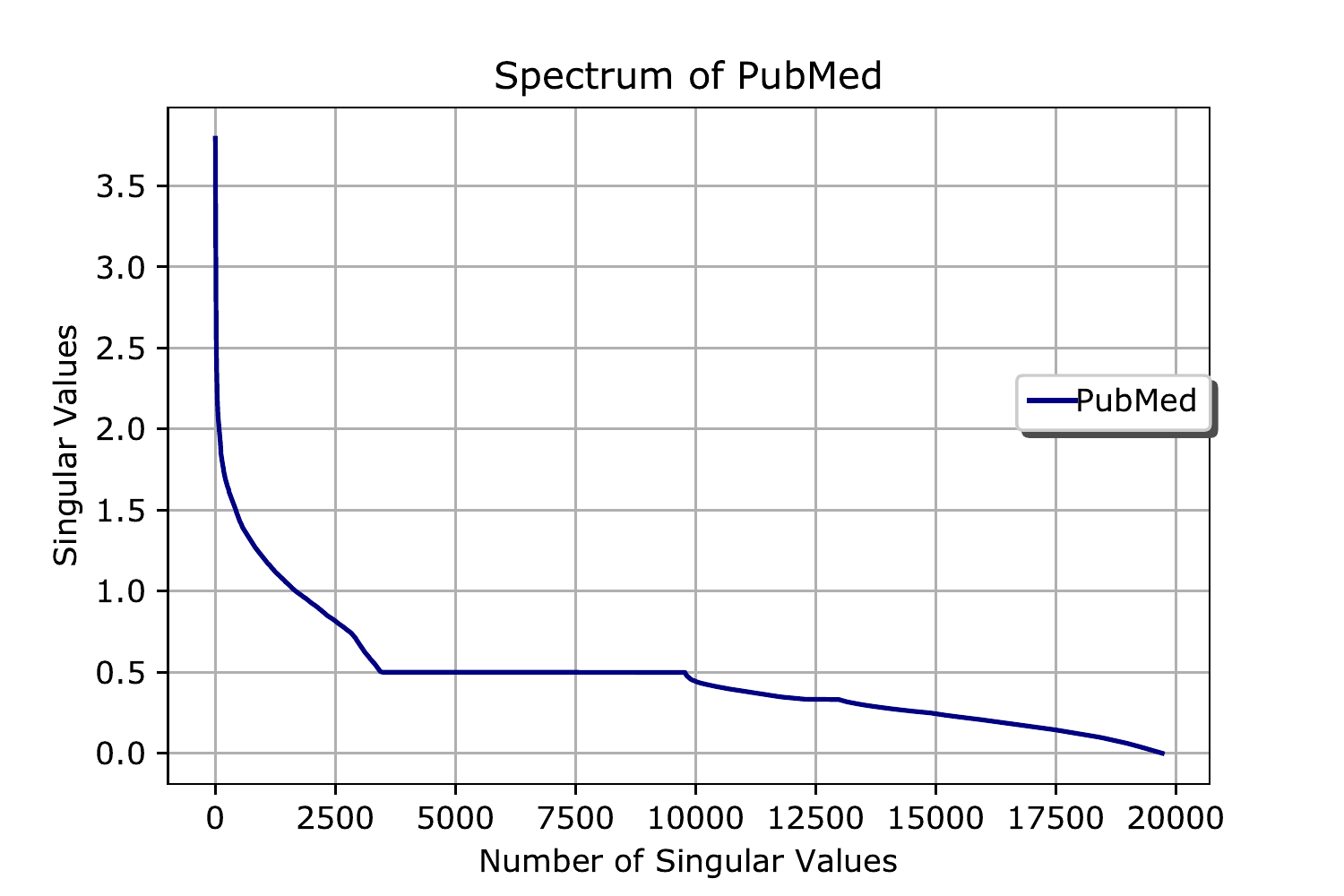}}
\caption{Spectrum of the renormalized adjacency matrices for several datasets}
\label{spectrum_dataset}
\end{figure*}


\subsection*{Appendix \rom{5}: Experiment Settings and Hyperparameters}
\label{appendix:5}
The so-called public splits in \cite{liao2019lanczos} and the setting that randomly sample 20 instances for each class as labeled data in \cite{yang2016revisiting} is actually the same. Most of the results for the algorithms with validation are cited from \cite{liao2019lanczos}, where they are reproduced with validation. However, some of them actually do not use validation in original papers and can achieve better results. In the paper, We compare with their best results.

We use NVIDIA apex amp mixed-precision plugin for PyTorch to accelerate our experiments. Most of the results were obtained from NVIDIA V100 clusters on Beluga of Compute-Canada, with minor part of them obtained from NVIDIA K20, K80 clusters on Helios Compute-Canada. The hyperparameters are searched using Bayesian optimization.

A useful tip is the smaller your training set is, the larger dropout probability should be set and the larger early stopping you should have.

Table \ref{tab:hyperparameters_without_validation} and Table \ref{tab:hyperparameters_with_validation} shows the hyperparameters to achieve the performance in the experiments, for cases without and with validation, respectively. When conducting the hyperparameter search, we encounter memory problems: current GPUs cannot afford deeper and wider structures. But we do observe better performance with the increment of the network size. It is expected to achieve better performance with more advanced deep learning devices.

\begin{table}[htbp]
\setlength{\tabcolsep}{1.5pt}
  \centering
  \caption{Hyperparameters for Tests with Validation}
  \scriptsize
    \begin{tabular}{ccccc|cccccc}
    \toprule
    \toprule
    \multirow{2}[1]{*}{Architecture} & \multirow{2}[1]{*}{Dataset} & \multirow{2}[1]{*}{Percentage} & \multicolumn{2}{c|}{Accuracy} & \multicolumn{6}{c}{Corresponding Hyperparameters} \\
          &       &       & Ours  & SOTA  & lr    & weight\_decay & hidden & layers/n\_blocks & dropout & optimizer \\
    \midrule
    \multirow{11}[0]{*}{\textbf{linear Snowball}} & \multirow{4}[0]{*}{\textbf{Cora}} & 0.5\% & 69.99 & 60.8  & 1.0689E-03 & 1.4759E-02 & 128   & 6     & 0.66987 & RMSprop \\
          &       & 1\%   & 73.10 & 67.5  & 1.4795E-03 & 2.3764E-02 & 128   & 9     & 0.64394 & RMSprop \\
          &       & 3\%   & 80.96 & 77.7  & 2.6847E-03 & 5.1442E-03 & 64    & 9     & 0.23648 & RMSprop \\
          &       & 5.2\% (public) & 83.19 & 83.0  & 1.6577E-04 & 1.8606E-02 & 1024  & 3     & 0.65277 & RMSprop \\
          & \multirow{3}[0]{*}{\textbf{CiteSeer}} & 0.5\% & 59.41 & 53.8  & 4.9284E-04 & 6.9420E-03 & 512   & 11    & 0.90071 & RMSprop \\
          &       & 1\%   & 65.85 & 63.3  & 3.2628E-03 & 1.6374E-02 & 512   & 3     & 0.97331 & RMSprop \\
          &       & 3.6\% (public) & 73.54 & 72.5  & 2.8218E-03 & 1.9812E-02 & 5000  & 1     & 0.98327 & Adam \\
          & \multirow{4}[0]{*}{\textbf{Pubmed}} & 0.03\% & 68.12 & 61.0  & 2.1124E-03 & 4.4161E-02 & 128   & 7     & 0.78683 & RMSprop \\
          &       & 0.05\% & 70.04 & 68.8  & 4.9982E-03 & 2.6460E-02 & 128   & 4     & 0.86788 & RMSprop \\
          &       & 0.1\% & 73.83 & 73.4  & 1.2462E-03 & 4.9303E-02 & 128   & 6     & 0.3299 & RMSprop \\
          &       & 0.3\% (public) & 79.23 & 79.0  & 2.4044E-03 & 2.3157E-02 & 4000  & 1     & 0.98842 & Adam \\
    \midrule
    \multirow{11}[1]{*}{\textbf{Snowball}} & \multirow{4}[0]{*}{\textbf{Cora}} & 0.5\% & 72.96 & 60.8  & 2.3228E-04 & 2.1310E-02 & 950   & 7     & 0.88945 & RMSprop \\
          &       & 1\%   & 76.76 & 67.5  & 1.5483E-04 & 1.3963E-02 & 250   & 15    & 0.55385 & RMSprop \\
          &       & 3\%   & 80.72 & 77.7  & 1.6772E-03 & 1.0725E-02 & 64    & 14    & 0.80611 & RMSprop \\
          &       & 5.2\% (public) & 83.60 & 83.0  & 1.2994E-05 & 9.4469E-03 & 5000  & 3     & 0.025052 & RMSprop \\
          & \multirow{3}[0]{*}{\textbf{CiteSeer}} & 0.5\% & 62.05 & 53.8  & 2.0055E-03 & 3.1340E-02 & 512   & 5     & 0.88866 & RMSprop \\
          &       & 1\%   & 64.23 & 63.3  & 1.8759E-03 & 9.3636E-03 & 128   & 7     & 0.77334 & RMSprop \\
          &       & 3.6\% (public) & 72.61 & 72.5  & 2.5527E-03 & 6.2812E-03 & 256   & 1     & 0.56755 & RMSprop \\
          & \multirow{4}[1]{*}{\textbf{Pubmed}} & 0.03\% & 70.78 & 61.0  & 1.1029E-03 & 1.8661E-02 & 100   & 15    & 0.83381 & RMSprop \\
          &       & 0.05\% & 73.23 & 68.8  & 3.7159E-03 & 2.2088E-02 & 400   & 9     & 0.9158 & RMSprop \\
          &       & 0.1\% & 76.52 & 73.4  & 4.9106E-03 & 3.0777E-02 & 100   & 15    & 0.79133 & RMSprop \\
          &       & 0.3\% (public) & 79.54 & 79.0  & 4.9867E-03 & 3.5816E-03 & 3550  & 1     & 0.98968 & Adam \\
    \midrule
    \multirow{11}[2]{*}{\textbf{truncated Krylov}} & \multirow{4}[1]{*}{\textbf{Cora}} & 0.5\% & 73.89 & 60.8  & 1.6552E-04 & 4.4330E-02 & 4950  & 27    & 0.97726 & Adam \\
          &       & 1\%   & 77.38 & 67.5  & 2.8845E-04 & 4.8469E-02 & 4950  & 30    & 0.93928 & Adam \\
          &       & 3\%   & 82.23 & 77.7  & 8.6406E-04 & 4.0126E-03 & 2950  & 16    & 0.98759 & Adam \\
          &       & 5.2\% (public) & 83.51 & 83.0  & 1.0922E-03 & 3.5966E-02 & 1950  & 10    & 0.98403 & Adam \\
          & \multirow{3}[0]{*}{\textbf{CiteSeer}} & 0.5\% & 63.65 & 53.8  & 2.8208E-03 & 4.3395E-02 & 1150  & 30    & 0.92821 & Adam \\
          &       & 1\%   & 68.36 & 63.3  & 3.9898E-03 & 3.8525E-03 & 100   & 27    & 0.71951 & Adam \\
          &       & 3.6\% (public) & 73.89 & 72.5  & 1.8292E-03 & 4.2295E-02 & 600   & 11    & 0.98865 & Adam \\
          & \multirow{4}[1]{*}{\textbf{Pubmed}} & 0.03\% & 71.11 & 61.0  & 3.6759E-03 & 1.2628E-02 & 512   & 8     & 0.95902 & RMSprop \\
          &       & 0.05\% & 72.86 & 68.8  & 4.0135E-03 & 4.8831E-02 & 4250  & 5     & 0.95911 & Adam \\
          &       & 0.1\% & 75.68 & 73.4  & 4.7562E-03 & 3.7134E-02 & 950   & 7     & 0.96569 & Adam \\
          &       & 0.3\% (public) & 79.88 & 79.0  & 3.9673E-04 & 2.2931E-02 & 1900  & 4     & 0.000127 & Adam \\
    \bottomrule
    \bottomrule
    \end{tabular}%
  \label{tab:hyperparameters_with_validation}%
\end{table}%

\begin{table}[htbp]
\setlength{\tabcolsep}{1.5pt}
  \centering
  \caption{Hyperparameters for Tests without Validation}
  \scriptsize
    \begin{tabular}{ccccc|cccccc}
    \toprule
    \toprule
    \multirow{2}[2]{*}{Architecture} & \multirow{2}[2]{*}{Dataset} & \multirow{2}[2]{*}{Percentage} & \multicolumn{2}{c|}{Accuracy} & \multicolumn{6}{c}{Correspondong Hyperparameters} \\
          &       &       & Ours  & SOTA  & lr    & weight\_decay & hidden & layers/n\_blocks & dropout & Optimizer \\
    \midrule
    \multirow{16}[2]{*}{\textbf{linear Snowball}} & \multirow{6}[1]{*}{\textbf{Cora}} & 0.5\% & 69.53 & 61.5  & 4.4438E-05 & 1.7409E-02 & 550   & 12    & 0.007753 & Adam \\
          &       & 1\%   & 74.12 & 69.9  & 1.0826E-03 & 3.3462E-03 & 1250  & 3     & 0.50426 & Adam \\
          &       & 2\%   & 79.43 & 75.9  & 2.4594E-06 & 9.6734E-03 & 1650  & 12    & 0.34073 & Adam \\
          &       & 3\%   & 80.41 & 78.5  & 2.8597E-05 & 3.4732E-02 & 900   & 15    & 0.039034 & Adam \\
          &       & 4\%   & 81.3  & 80.4  & 3.6830E-05 & 1.5664E-02 & 3750  & 4     & 0.93797 & Adam \\
          &       & 5\%   & 82.19 & 81.7  & 5.8323E-06 & 8.5940E-03 & 2850  & 5     & 0.14701 & Adam \\
          & \multirow{6}[0]{*}{\textbf{CiteSeer}} & 0.5\% & 56.76 & 56.1  & 4.5629E-03 & 2.0106E-03 & 300   & 3     & 0.038225 & Adam \\
          &       & 1\%   & 65.44 & 62.1  & 3.5530E-05 & 4.9935E-02 & 600   & 6     & 0.03556 & Adam \\
          &       & 2\%   & 68.78 & 68.6  & 6.1176E-06 & 3.0101E-02 & 1950  & 3     & 0.040484 & Adam \\
          &       & 3\%   & 71    & 70.3  & 2.1956E-05 & 4.3569E-02 & 3350  & 3     & 0.30207 & Adam \\
          &       & 4\%   & 72.23 & 70.8  & 9.1952E-05 & 4.6407E-02 & 3350  & 2     & 0.018231 & Adam \\
          &       & 5\%   & 72.21 & 71.3  & 3.7173E-03 & 1.9605E-03 & 2950  & 1     & 0.96958 & Adam \\
          & \multirow{4}[1]{*}{\textbf{Pubmed}} & 0.03\% & 64.133 & 62.2  & 1.0724E-03 & 8.1097E-03 & 64    & 4     & 0.8022 & RMSProp \\
          &       & 0.05\% & 69.48 & 68.3  & 1.5936E-03 & 3.0236E-03 & 6     & 10    & 0.73067 & RMSProp \\
          &       & 0.1\% & 72.93 & 72.7  & 4.9733E-03 & 1.3744E-03 & 128   & 3     & 0.91214 & RMSProp \\
          &       & 0.3\% & 79.33 & 79.2  & 1.7998E-03 & 9.6753E-04 & 512   & 1     & 0.97483 & RMSProp \\
    \midrule
    \multirow{16}[2]{*}{\textbf{Snowball}} & \multirow{6}[1]{*}{\textbf{Cora}} & 0.5\% & 67.15 & 61.5  & 9.8649E-04 & 1.0305E-02 & 1600  & 3     & 0.92785 & Adam \\
          &       & 1\%   & 73.47 & 69.9  & 1.4228E-04 & 1.3472E-02 & 100   & 13    & 0.68601 & Adam \\
          &       & 2\%   & 78.54 & 75.9  & 5.7111E-06 & 1.5544E-02 & 600   & 13    & 0.022622 & Adam \\
          &       & 3\%   & 79.97 & 78.5  & 4.0278E-05 & 2.7287E-02 & 4350  & 5     & 0.57173 & Adam \\
          &       & 4\%   & 81.49 & 80.4  & 1.4152E-05 & 2.3359E-02 & 2500  & 13    & 0.018578 & Adam \\
          &       & 5\%   & 81.82 & 81.7  & 1.2621E-03 & 1.5323E-02 & 3550  & 2     & 0.87352 & Adam \\
          & \multirow{6}[0]{*}{\textbf{CiteSeer}} & 0.5\% & 56.39 & 56.1  & 2.6983E-03 & 2.5370E-02 & 300   & 6     & 0.82964 & Adam \\
          &       & 1\%   & 65.04 & 62.1  & 1.6982E-03 & 1.5473E-02 & 2150  & 2     & 0.98611 & Adam \\
          &       & 2\%   & 69.48 & 68.6  & 9.7299E-05 & 4.9675E-02 & 2150  & 3     & 0.71216 & Adam \\
          &       & 3\%   & 71.09 & 70.3  & 1.7839E-04 & 3.0874E-02 & 2150  & 2     & 0.16549 & Adam \\
          &       & 4\%   & 72.32 & 70.8  & 5.6575E-05 & 3.5949E-02 & 4800  & 2     & 0.012576 & Adam \\
          &       & 5\%   & 72.8  & 71.3  & 2.8643E-04 & 1.6399E-02 & 2000  & 2     & 0.37308 & Adam \\
          & \multirow{4}[1]{*}{\textbf{Pubmed}} & 0.03\% & 62.94 & 62.2  & 1.2700E-03 & 1.4159E-03 & 128   & 4     & 0.76848 & RMSProp \\
          &       & 0.05\% & 68.31 & 68.3  & 1.1224E-03 & 9.9166E-05 & 256   & 3     & 0.85496 & RMSProp \\
          &       & 0.1\% & 73.29 & 72.7  & 6.0506E-04 & 1.0303E-03 & 256   & 2     & 0.97988 & RMSProp \\
          &       & 0.3\% & 79.63 & 79.2  & 1.1416E-03 & 6.1543E-04 & 128   & 1     & 0.989 & RMSProp \\
    \midrule
    \multirow{16}[2]{*}{\textbf{truncated Krylov}} & \multirow{6}[1]{*}{\textbf{Cora}} & 0.5\% & 72.96 & 61.5  & 3.3276E-03 & 1.0496E-04 & 128   & 18    & 0.76012 & RMSProp \\
          &       & 1\%   & 75.52 & 69.9  & 7.4797E-04 & 9.1736E-03 & 2048  & 20    & 0.98941 & RMSProp \\
          &       & 2\%   & 80.31 & 75.9  & 1.7894E-04 & 1.1079E-02 & 4096  & 16    & 0.97091 & RMSProp \\
          &       & 3\%   & 81.54 & 78.5  & 4.3837E-04 & 2.6958E-03 & 512   & 17    & 0.96643 & RMSProp \\
          &       & 4\%   & 82.47 & 80.4  & 3.6117E-03 & 4.1040E-04 & 64    & 25    & 0.021987 & RMSProp \\
          &       & 5\%   & 83.36 & 81.7  & 1.0294E-03 & 5.3882E-04 & 256   & 23    & 0.028392 & RMSProp \\
          & \multirow{6}[0]{*}{\textbf{CiteSeer}} & 0.5\% & 59.6  & 56.1  & 1.9790E-03 & 4.0283E-04 & 16    & 20    & 0.007761 & RMSProp \\
          &       & 1\%   & 65.95 & 62.1  & 7.8506E-04 & 8.2432E-03 & 64    & 24    & 0.28159 & RMSProp \\
          &       & 2\%   & 70.23 & 68.6  & 5.4517E-04 & 1.0818E-02 & 256   & 12    & 0.27027 & RMSProp \\
          &       & 3\%   & 71.81 & 70.3  & 1.4107E-04 & 5.0062E-03 & 1024  & 9     & 0.57823 & RMSProp \\
          &       & 4\%   & 72.36 & 70.8  & 4.8864E-06 & 1.8038E-02 & 4096  & 12    & 0.11164 & RMSProp \\
          &       & 5\%   & 72.24 & 71.3  & 2.1761E-03 & 1.1753E-02 & 5000  & 8     & 0.71473 & Adam \\
          & \multirow{4}[1]{*}{\textbf{Pubmed}} & 0.03\% & 69.07 & 62.2  & 6.8475E-04 & 2.8822E-02 & 4096  & 7     & 0.97245 & RMSProp \\
          &       & 0.05\% & 71.77 & 68.3  & 2.3342E+04 & 2.2189E-03 & 1024  & 8     & 0.93694 & RMSProp \\
          &       & 0.1\% & 76.07 & 72.7  & 4.2629E-04 & 4.1339E-03 & 2048  & 8     & 0.98914 & RMSProp \\
          &       & 0.3\% & 80.04 & 79.2  & 2.2602E-04 & 3.3626E-02 & 2000  & 7     & 0.070573 & Adam \\
    \bottomrule
    \bottomrule
    \end{tabular}%
  \label{tab:hyperparameters_without_validation}%
\end{table}%

\end{document}